\documentclass[12pt]{amsart}
\usepackage[margin=1in]{geometry}
\usepackage{newtxtext}
\usepackage{newtxmath}
\usepackage{amsmath}
\usepackage{amscd}
\usepackage[alphabetic]{amsrefs}
\usepackage{graphicx}
\usepackage{caption}
\usepackage{tikz}
	\usetikzlibrary{decorations.pathreplacing}
	\usetikzlibrary{decorations.pathmorphing}
	\usetikzlibrary{patterns}
\usepackage{setspace}
\usepackage[hidelinks]{hyperref}
\usepackage{comment}
\usepackage{subcaption}
\usepackage{turnstile}

\usepackage{verified-badges}
\usepackage[T1]{fontenc}

\newcommand{\N}{\mathbb{N}}

\newtheorem{theorem}{Theorem}
\usepackage{etoolbox}
\usepackage{xparse}

\let\oldtheorem\theorem
\let\oldendtheorem\endtheorem
\RenewDocumentEnvironment{theorem}{o}{%
  \IfNoValueTF{#1}{\oldtheorem}{\oldtheorem[#1]\addcontentsline{toc}{subsection}{Theorem \thetheorem{} (#1)}}%
}{%
  \oldendtheorem
}

\theoremstyle{definition}

\theoremstyle{remark}

\newtheorem{remark}{Remark}

\usepackage{listings}

\lstdefinelanguage{Lean}{
  sensitive=true,
  morecomment=[l]{--},            
  morecomment=[s]{/-}{-/},        
  morestring=[b]",               
  alsoletter={'.},               
  keywords=[1]{                  
    theorem, lemma, def, abbrev, example,
    inductive, structure, class, instance,
    namespace, section, end, open,
    variable, variables, parameter, parameters,
    universe, universes,
    axioms, constant, constants,
    import, exporting,
    where, match, with, fun, by,
    if, then, else,
    let, in,
    do, return,
    deriving,
    macro, syntax, notation, infix, infixl, infixr, prefix, postfix,
    private, protected, partial, mutual,
    attribute, builtin,
    simp, dsimp, intro, intros, exact, apply, refine, constructor,
    cases, induction, revert, have, show, suffices,
    calc, conv, rfl, simp_all, aesop
  },
  keywords=[2]{                  
    Prop, Type, Sort, forall, exists
  },
  keywords=[3]{                  
    simp, dsimp, ring, omega, linarith, nlinarith
  }
}

\lstdefinestyle{lean}{
  language=Lean,
  basicstyle=\ttfamily\footnotesize,
  columns=fullflexible,
  keepspaces=true,
  showstringspaces=false,
  upquote=true,
  commentstyle=\itshape\color{gray!70},
  stringstyle=\color{orange!70!black},
  keywordstyle=[1]\bfseries\color{blue!65!black},
  keywordstyle=[2]\bfseries\color{purple!60!black},
  keywordstyle=[3]\bfseries\color{teal!60!black},
  frame=single,
  framerule=0.3pt,
  rulecolor=\color{black!20},
  backgroundcolor=\color{black!2},
  numbers=left,
  numberstyle=\tiny\color{black!45},
  numbersep=6pt,
  breaklines=true,
  breakatwhitespace=true,
  tabsize=2,
  literate=
    {∀}{{$\forall$}}1
    {∃}{{$\exists$}}1
    {→}{{$\to$}}1
    {α}{{$\alpha$}}1
    {∧}{{$\wedge$}}1
}

\newcommand{\leaninline}[1]{\lstinline[style=lean]!#1!}

\begin{document}

\title{Compression is all you need: Modeling Mathematics}
\date{\today}
\author{Vitaly Aksenov}
\address{\hspace{-\parindent}Vitaly Aksenov, Logical Intelligence}
\email{aksenov@logicalintelligence.com}

\author{Eve Bodnia}
\address{\hspace{-\parindent}Eve Bodnia, Logical Intelligence}
\email{evebodnia@logicalintelligence.com}

\author{Michael H. Freedman}
\address{\hspace{-\parindent}Michael H. Freedman, Logical Intelligence; Center of Mathematical Sciences and Applications, Harvard University, Cambridge, MA 02138, USA}
\email{michael.freedman@logicalintelligence.com, mfreedman@cmsa.fas.harvard.edu}

\author{Michael Mulligan}
\address{\hspace{-\parindent}Michael Mulligan, Logical Intelligence; Department of Physics and Astronomy, University of California, Riverside, CA 92521, USA}
\email{michael.mulligan@logicalintelligence.com, michael.mulligan@ucr.edu}

\maketitle


\begin{abstract}
Human mathematics (HM), the mathematics humans discover and value, is a vanishingly small subset of formal mathematics (FM), the totality of all valid deductions.
We argue that HM is distinguished by its compressibility through hierarchically nested definitions, lemmas, and theorems.
We model this with monoids.
A mathematical deduction is a string of primitive symbols; a definition or theorem is a named substring or macro whose use compresses the string.
In the free abelian monoid $A_n$, a logarithmically sparse macro set achieves exponential expansion of expressivity.
In the free non-abelian monoid $F_n$, even a polynomially-dense macro set only yields linear expansion; superlinear expansion requires near-maximal density.
We test these models against MathLib, a large Lean~4 library of mathematics that we take as a proxy for HM.
Each element has a depth (layers of definitional nesting), a wrapped length (tokens in its definition), and an unwrapped length (primitive symbols after fully expanding all references).
We find unwrapped length grows exponentially with both depth and wrapped length; wrapped length is approximately constant across all depths.
These results are consistent with $A_n$ and inconsistent with $F_n$, supporting the thesis that HM occupies a polynomially-growing subset of the exponentially growing space FM.
We discuss how compression, measured on the MathLib dependency graph, and a PageRank-style analysis of that graph can quantify mathematical interest and help direct automated reasoning toward the compressible regions where human mathematics lives.
\end{abstract}

\section{Introduction}

In this paper, we argue that math is soft and squishy---that this is its defining characteristic.
By ``math'' we do not mean the totality of all possible formal deductions, formal mathematics (FM), but rather human mathematics (HM), the type of arguments humans find and those we will appreciate when our AI agents find them for us.\footnote{However mathematics is formalized, we know from G\"odel and other sources that there will be true statements without proofs (such as consistency of the system).
It is possible that extremely simple $\Pi^0_1$ statements of Peano arithmetic, such as the Goldbach conjecture (GC: ``Every even number ${}>2$ is the sum of two primes''), could be both true and without any proof.
Since our discussion is anchored to the concept of proof, we would not count GC as part of HM or even FM if that is the case, despite the fact that it is of interest to humans.
A more subtle question: suppose GC has a proof but the shortest one is $10^{100}$ lines long---is it part of HM\@?
Fortunately we do not have to adjudicate this; our results are not sharp enough to require us to identify an exact frontier to HM\@.
Moreover, HM might more precisely be considered a measure on FM that fades out rather than abruptly terminating at the edge of a subset; see~\cite{BDF}.}
By ``soft and squishy,'' we mean compressible through the use of hierarchically nested concepts: definitions, lemmas, and theorems.

The finding that math is about compression is not new.
We were scooped 3,000 years ago by the invention of place notation.
Consider $\mathbb{N}$, the natural numbers, with generating set $\{1\}$.
Place notation introduces additional symbols, or \emph{macros}: ``10'' for ten ones, ``100'' for ten tens, and so on.
With logarithmically many macros, expressivity expands exponentially.
This (exponential) expansion of expressivity is the flip side of notational compression.
Creating and exploiting definitions expands what we can reach by compressing expressions written in a primitive, definition-poor language.
(Our theorems below will be stated in terms of expansion; our informal discussions often use compression.)

Formal mathematics can be viewed as a directed hypergraph (DH) emerging from axioms and syntactical rules \cite{BDF}.
The DH records the full deduction space: every possible proof step, with each hyperedge specifying which premises are combined (Fig.~\ref{fig:dh-vs-dag}, left).
A proof is a sub-hypergraph of the DH; flattened into a linear sequence, it becomes a string of primitive symbols.
We study finitely-generated monoids as models for such strings: word length measures size, and naming a substring for reuse---a macro---compresses it.
(Monoids with relations can simulate Turing machines \cite{Post}, so, despite its simplicity, this basic framework is computationally universal.)
The simplest case is $A_1 = \mathbb{N}$, the natural numbers. 
To study compression more generally, we consider the free abelian monoid $A_n$ and the free (non-abelian) monoid $F_n$ (with $n$ denoting the number of generators).
In $A_n$, the generators commute, so only the multiplicities matter.
In $F_n$, the order of generators is important, and there are no relations; since formal proofs are strings of symbols where order matters, $F_n$ might be presumed to model formal deduction.
We will argue, contrary this expectation, that the compression exhibited by human mathematics is characteristic of $A_n$, not $F_n$.

Our main theoretical results quantify the expansion that macros achieve in $A_n$ and $F_n$.
In $A_n$, logarithmically many macros achieve exponential expansion (Theorem~\ref{thm:abelian-place}), and macros of polynomial density (growth exponent $1/k$) can yield infinite expansion---every element expressible with bounded length---via Waring's theorem (Theorem~\ref{thm:abelian-waring}).
In $F_n$ even polynomially growing macros (polynomial as a function of radius, i.e., polylogarithmic as a function of volume) yield only linear expansion (Theorem~\ref{thm:free-polynomial}).
Superlinear expansion in $F_n$ requires an exponential number of macros (Theorem~\ref{thm:free-probabilistic}), in contrast with the logarithmically sparse macro set that suffices for exponential expansion in $A_n$.
This difference reflects underlying growth rates: balls grow polynomially in $A_n$ but exponentially in $F_n$.
Our study of macro sets in $A_n$ extends straightforwardly to the much larger class of free nilpotent monoids; they have nearly identical expansion properties to $A_n$ and can equally serve as models for HM according to the analysis presented here (see Sections~\ref{nilpotent_discussion} for details).

\begin{figure}[ht]
\centering
\begin{tikzpicture}[
    every node/.style={font=\small},
    stmt/.style={draw, rounded corners=4pt, inner sep=5pt},
    arr/.style={->, >=stealth, semithick},
    vnode/.style={fill=black, circle, inner sep=2pt},
]
\node[font=\small\bfseries] at (-3.5, 5) {FM DH};
\node[stmt] (ABC) at (-3.5, 3.8) {$A \wedge B \wedge C$};
\node[vnode] (v1) at (-5.5, 2) {};
\node[vnode] (v2) at (-1.5, 2) {};
\node[stmt] (AB) at (-7, 0) {$A \wedge B$};
\node[stmt] (C1) at (-4, 0) {$C$};
\node[stmt] (A) at (-3, 0) {$A$};
\node[stmt] (BC) at (0, 0) {$B \wedge C$};
\draw[arr] (ABC) -- (v1);
\draw[arr] (v1) -- (AB);
\draw[arr] (v1) -- (C1);
\draw[arr] (ABC) -- (v2);
\draw[arr] (v2) -- (A);
\draw[arr] (v2) -- (BC);
\node[font=\small\bfseries] at (6, 5) {MathLib DAG};
\node[stmt] (ABC2) at (6, 3.8) {$A \wedge B \wedge C$};
\node[stmt] (AB2) at (4.5, 0) {$A \wedge B$};
\node[stmt] (C2) at (7.5, 0) {$C$};
\draw[arr] (ABC2) -- (AB2);
\draw[arr] (ABC2) -- (C2);
\end{tikzpicture}
\caption{A FM DH fragment (left) and a corresponding MathLib DAG (right) for deriving $A \wedge B \wedge C$ via $\wedge$-introduction.
In the DH, filled dots represent hyperedges: each groups the premises used in a single inference step.
Both proofs (via $A \wedge B$ and via $B \wedge C$) are recorded.
The DAG selects one proof and replaces the hyperedge with ordinary edges.
If both proofs in the DH were recorded within a DAG, there would be an ambiguity in how a given conclusion followed from the premises.
Note also that the above example is special in the sense that the hyperedges are reversible.}
\label{fig:dh-vs-dag}
\end{figure}
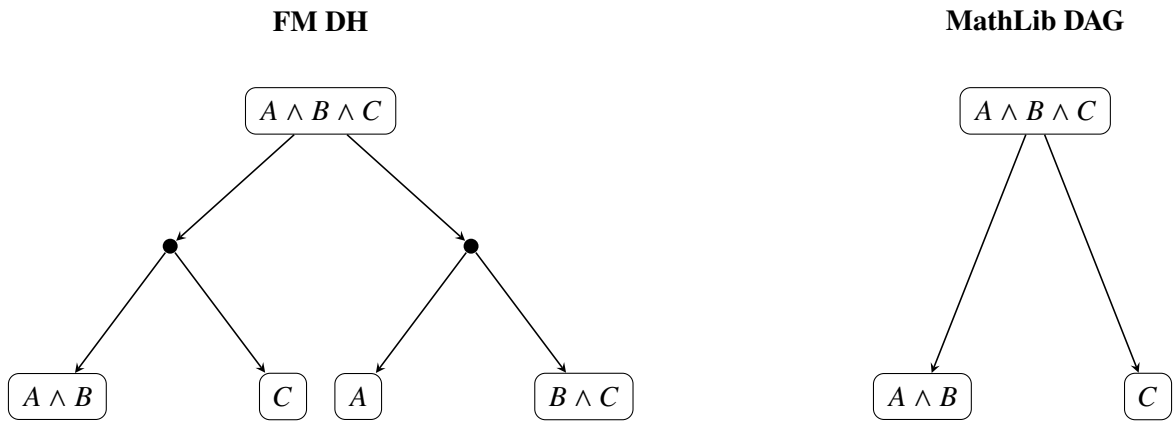

We test these facts against MathLib \cite{mathlib2020}, a large repository of mathematics written in Lean~4 \cite{Lean4} that contains hundreds of thousands of definitions, lemmas, and theorems.
We use MathLib as a proxy for HM.
MathLib can be viewed as a DAG extracted from the full deduction hypergraph (Fig.~\ref{fig:dh-vs-dag}).
Each MathLib element is a named subgraph of this DAG, rooted at the element itself and extending down to primitives.
Flattening this subgraph by recursively expanding all references yields a string of primitives.
The \emph{wrapped length} counts the tokens in an element's defining expression; 
the \emph{unwrapped length} counts the primitives in the flattened string.
We find the longest element, when fully unwrapped, reaches approximately $10^{104}$ primitive terms---Googol, the number, not the company.\footnote{\raggedright 
\href{https://github.com/leanprover-community/mathlib4/blob/d167cc6dc962ab340507362ea2f4bcfcff44f01b/Mathlib/AlgebraicGeometry/AffineTransitionLimit.lean\#L423}{\texttt{AlgebraicGeometry.Scheme.exists\_hom\_hom\_comp\_eq\_comp\_of\_locallyOfFiniteType}} in MathLib; see Section~\ref{mathlib_section} for discussion.}
Our primary observations about MathLib are as follows.
First, unwrapped length grows exponentially with depth (the longest path to primitives in the dependency graph).
Second, wrapped length is approximately constant across all depths.
Third, unwrapped length grows exponentially with wrapped length.
The MathLib data is consistent with the $A_n$ logarithmic-density regime and inconsistent with the $F_n$ alternatives we study. 

Our central inference is that HM is a thin subset of polynomial growth within the exponentially growing space FM.
This is a stronger claim than the observation that HM is a vanishingly small subset of FM: the latter would hold if both grew exponentially at different rates \cites{Shannon1948, CoverThomas2006}.
We propose $A_n$ as a model for HM and products $\prod F_{n_i}$ as a model for FM.
Since HM $\subset$ FM, the models must respect this inclusion; $A_n$ embeds into $\prod F_{n_i}$ by sending each generator to a distinct factor.

Our toy models map HM to monoids. 
What we observe clearly in both source and target is compression and hierarchical depth: in the monoid, both are deduced from a postulated macro set; in MathLib, both can be measured. 
The comparison allows us to infer properties of a hypothetical ``macro set'' for HM.
Physical models often gain power by defining abstractions not directly observed---the vector potential in electromagnetism, the Hilbert space in quantum mechanics. 
Similarly, our model may not be surjective: some abstractions in the monoid---notably the macro set itself---may have no direct counterpart on the HM side.
We do not identify a precise ``macro set'' within MathLib (or HM more generally) that maps to the macro set in the monoid, but regard this as a deep open problem, tantamount to locating the owner's manual for HM.

Place notation demonstrates two features of mathematics that our models can capture.
The first is hierarchy: the recursive fashion in which notation, ideas, definitions, and proofs are fitted together.
The second is parsimony: we have limited storage for new concepts, so definitions must be chosen to strike a balance between marking out landmarks in an infinite structure and not overtaxing our capacity to remember them.
HM works where compression is possible---it suits our minds and supports our inherent laziness, allowing large strides across the mathematical landscape with minimal effort.
Logarithmic density, as in powers of $10$, lies near this parsimony boundary.
The results of Section~\ref{monoid_models_section} explore the parsimony/expansion tradeoff systematically, showing how expansion rates depend on macro density across several regimes.
The MathLib data of Section~\ref{mathlib_section} confirms the hierarchical structure quantitatively: $\log(\text{unwrapped length})$ grows linearly with depth, with slope close to $1$ bit per level.

If compression characterizes human mathematics, it can also serve as a measure of mathematical \emph{interest}.
An element whose terse statement conceals an enormous proof body exhibits high \emph{deductive compression}; an element that compresses dramatically when definitions are applied sits in a region where the definitional hierarchy is useful. We call the latter \emph{reductive compression}.
Section~\ref{section_application_outlook} develops these ideas into quantitative interest measures and a PageRank-style refinement \cite{brin1998anatomy} that accounts for an element's role in supporting other high-value mathematics.
The goal is to give AI agents exploring formal mathematics a sense of direction: stay where compression is possible.

Various LLMs collaborated with us on the proofs of the theorems in Section~\ref{monoid_models_section}.
The \leanproof{} symbol below indicates that the theorem has been formally verified in Lean~4 by Aleph \cite{Aleph}, a theorem-proving system developed by Logical Intelligence.

The remainder of the paper is organized as follows.
Section~\ref{monoid_models_section} develops the monoid models and proves the main expansion theorems.
Section~\ref{mathlib_section} presents the MathLib analysis.
Section~\ref{discussion_section} further discusses the results and related ideas.
Section~\ref{section_application_outlook} considers future work on automating mathematical interest and related open questions.
Appendix~\ref{appendix:additional-theorems} contains additional expansion theorems for $A_n$.

\section{Monoid Models}
\label{monoid_models_section}

We study two basic monoids on $n$ generators $G = \{a_1, \ldots, a_n\}$: the free abelian monoid $A_n$ and the free monoid $F_n$.
In $A_n$, generators commute, so elements essentially live in $\N^n$ with componentwise addition.
In $F_n$, order matters and there are no relations; elements are finite strings over $G$.
For an element or word $w$ in either monoid, write $|w|_G$ for its \emph{length}: the sum of coefficients for $A_n$, or the string length for $F_n$.

A \emph{macro set} $M = \{g_i\}$ consists of additional generators, each defined by $g_i = w_i$ for some word $w_i$ written in terms of elements from $G$.
The augmented generating set is $G' = G \cup M$, and $|w|_{G'}$ denotes the minimum number of $G'$-generators needed to represent $w$.
Conceptually, while each $g_i \in M$ is an individual macro, the set $M$ itself represents a compression strategy.
\footnote{In logic and computer science, ``macro'' often refers to the transformation rule (the strategy) itself.
Here, we maintain the monoid-theoretic convention where the macro is the resulting element, and the set $M$ constitutes the strategy.}

We quantify the effectiveness of such a strategy with the \emph{expansion function},
\[
f_{G'}(s) = \sup\{r \in \N : B_G(r) \subseteq B_{G'}(s)\}.
\]
Here, the \emph{ball of radius $r$} is $B_G(r) = \{w : |w|_G \leq r\}$, with $B_{G'}(s)$ defined analogously.
Since $G \subseteq G'$, we have $|w|_{G'} \leq |w|_G$ and thus $B_G(s) \subseteq B_{G'}(s)$.
The expansion function measures the largest $G$-radius fully covered by the $G'$-ball of radius $s$.

Our main results are summarized in Table~\ref{tab:main-results}.
For concreteness, the table states the $A_n$ results for $A_1 = \N$; they extend to general $A_n$ by taking $n$ copies of each macro (one per generator), with the same asymptotic expansion rates.
In $A_n$, balls grow polynomially ($|B_G(r)| = \binom{r+n}{n}$), and sparse macros yield dramatic expansion---exponential or even infinite.
The polylogarithmic row reflects an upper bound (Theorem~\ref{thm:abelian-upper}); we do not establish a matching lower bound, so the true expansion for such macros may lie strictly between exponential and quasi-exponential.
In $F_n$, balls grow exponentially ($|B_G(r)| = \frac{n^{r+1}-1}{n-1}$), and expansion is linear for a polynomial-dense macro set and superlinear for an exponentially-growing macro set.

\begin{table}[ht]
\centering
\begin{tabular}{lllll}
\hline
Monoid & Macro $M$ & Density & Expansion $f_{G'}(s)$ & Theorem \\
\hline
$A_1$ & $\{m^k : m \geq 1\}$ & $r^{1/k}$ & $\infty$ & \ref{thm:abelian-waring} \\
$A_1$ & $\{b^{j^p} : j \geq 1\}$ & $(\log r)^{1/p}$ & $\leq e^{c \cdot s \log s}$ & \ref{thm:abelian-upper} \\
$A_1$ & $\{b^j : j \geq 1\}$ & $\log r$ & $\Theta(b^{cs})$ & \ref{thm:abelian-place} \\
$A_1$ & $\{b^{b^j} : j \geq 0\}$ & $\log_b \log r$ & $s^{b/(b-1)}$ to $s^{(2b-1)/(b-1)}$ & \ref{thm:abelian-double-log} \\$A_1$ & finite & $O(1)$ & $\Theta(s)$ & \ref{thm:abelian-finite} \\
\hline
$F_n$ & polynomial & $r^p$ & $O(s)$ & \ref{thm:free-polynomial} \\
$F_n$ & probabilistic & $n^r/\log r$ & $\geq e^{c\sqrt{s}}$ & \ref{thm:free-probabilistic} \\
\hline
\end{tabular}
\caption{Macro density versus expansion. For $A_n$ with $n > 1$, expansion rates remain the same. 
The expansion properties for $A_n$ also hold for free nilpotent monoids.
The $F_n$ polynomial result holds for any polynomial density and $n \geq 2$; up to constants, it holds for any finitely presented \ monoid of exponential growth. The $F_n$ probabilistic result gives a macro set of logarithmically-vanishing density ($|M \cap S_r|/|S_r| \sim 1/\log(r) \to 0$), where $S_r = \{w : |w|_G = r\}$.}
\label{tab:main-results}
\end{table}

We now proceed to the theorems that characterize expansion of different macro sets in $A_n$ and $F_n$.
The detailed proofs are not required for the later parts of the paper.

\subsection{Free Abelian Monoid}

Place notation is the archetypal example of compression in $A_n$.

\begin{theorem}[Place notation gives exponential expansion \leanproof{https://github.com/Aksenov239/lean-fun/blob/toy-math-model/LeanFun/theorem1.lean\#L1531}]\label{thm:abelian-place}
For $A_n$ and any integer $b \geq 2$, the macro set $M = \{ b^j a_i : i = 1, \ldots, n, \ j \geq 1 \}$ has logarithmic density and satisfies
\[
b^{s/(n(b-1)) - 1} \leq f_{G'}(s) \leq nb \cdot b^{s/(n(b-1))}
\]
for all integers $s \geq 1$.
In particular, $f_{G'}(s) = \Theta(b^{s/(n(b-1))})$.
\end{theorem}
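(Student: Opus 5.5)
The plan is to reduce everything to base-$b$ digit sums. Write an element of $A_n$ as $x=\sum_i c_i a_i$, so that $|x|_G=\sum_i c_i$. Every macro $b^j a_i$ involves only the single generator $a_i$, and $A_n$ is free abelian. Hence any $G'$-expression for $x$ splits coordinatewise, and
\[
|x|_{G'}=\sum_{i=1}^n \sigma_b(c_i),
\]
where $\sigma_b(c)$ is the least number of terms needed to write $c$ as a sum of powers $b^j$ ($j\ge 0$, with $b^0$ giving the generator itself).

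The first step is to show that $\sigma_b(c)$ equals the base-$b$ digit sum of $c$. I would use an exchange argument. Suppose an optimal representation uses some power $b^j$ at least $b$ times. Replacing $b$ copies of $b^j$ by one copy of $b^{j+1}$ lowers the count by $b-1\ge 1$, contradicting optimality. So in an optimal representation every $b^j$ appears at most $b-1$ times. By uniqueness of base-$b$ expansion, that representation is then exactly the digit expansion.

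The density claim is immediate. The macros of $G$-length at most $r$ number $n\lfloor \log_b r\rfloor$.

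Set $t=s/(n(b-1))$.

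\emph{Lower bound.} Let $k=\lfloor t\rfloor$, and take $x$ with $\sum c_i\le b^k$.
\begin{itemize}
\item If some $c_i=b^k$, then all other coordinates are $0$. Then $|x|_{G'}=1\le s$.
\item Otherwise every $c_i<b^k$. Each such $c_i$ has at most $k$ digits, each at most $b-1$. So $|x|_{G'}\le nk(b-1)\le s$.
\end{itemize}
Thus $B_G(b^k)\subseteq B_{G'}(s)$. This gives $f_{G'}(s)\ge b^k\ge b^{t-1}$, since $k>t-1$. The reason for using radius $b^k$ rather than $b^k-1$ is that $b^k-1$ need not be $\ge b^{t-1}$ when $t$ is non-integral, whereas $b^k$ always is.

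\emph{Upper bound.} Let $m=\lfloor t\rfloor+1$, so that $nm(b-1)>s$. Take $c_i=b^m-1$ for every $i$. Each coordinate has digit sum $m(b-1)$, so $|x|_{G'}=nm(b-1)>s$ and $x\notin B_{G'}(s)$. On the other hand,
\[
|x|_G=n(b^m-1)<nb^{t+1}=nb\cdot b^{t}.
\]
Hence $f_{G'}(s)<nb\cdot b^{s/(n(b-1))}$. The $\Theta$ statement follows from the two bounds.

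The only step with real content is the first one: that optimal representations are coordinatewise and equal to digit expansions. Freeness of $A_n$ handles the coordinatewise part, and the exchange argument handles the rest. The remaining care is in the floor and ceiling bookkeeping at non-integral $t$, which the choice of radius $b^k$ in the lower bound resolves.
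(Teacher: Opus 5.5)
Your proof is correct and follows the paper's route: base-$b$ digit expansions for the lower bound, and the element with every coordinate equal to $b^m-1$ as the hard-to-compress witness for the upper bound. Your version is slightly more careful than the paper's in two places. First, you prove that digit expansions are optimal, whereas the paper simply asserts $|w_k|_{G'}=n(b-1)k$ even though the upper bound needs the inequality $\geq$. Second, your choice of radius $b^{\lfloor t\rfloor}$ fixes a rounding gap: the paper's condition $r\le b^{s/(n(b-1))-1}$ only yields $f_{G'}(s)\ge\lfloor b^{s/(n(b-1))-1}\rfloor$.
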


\begin{proof}
The macro set $M = \{ g_{i,j} = b^j a_i : i = 1, \ldots, n, \ j \geq 1 \}$ has logarithmic density: the number of macros with $|g_{i,j}|_G = b^j \leq r$ is $n \lfloor \log_b r \rfloor = O(\log r)$.

\textbf{Lower bound.}
Any element $w \in A_n$ can be written uniquely as $w = x_1 a_1 + x_2 a_2 + \cdots + x_n a_n$ with $x_i \in \N$.
Writing each nonzero $x_i$ in base $b$ as $x_i = \sum_{j=0}^{J_i} c_{i,j} b^j$ with $c_{i,j} \in \{0, 1, \ldots, b-1\}$ and $J_i = \lfloor \log_b x_i \rfloor$, we have
\[
x_i a_i = c_{i,0} a_i + c_{i,1} g_{i,1} + c_{i,2} g_{i,2} + \cdots + c_{i,J_i} g_{i,J_i}.
\]
The $G'$-length of $x_i a_i$ is $\sum_{j=0}^{J_i} c_{i,j} \leq (b-1)(J_i + 1) = (b-1)(\lfloor \log_b x_i \rfloor + 1)$.
For $w \in B_G(r)$, we have $|w|_G = \sum_i x_i \leq r$, so each $x_i \leq r$ and thus
\[
|w|_{G'} \leq \sum_{i=1}^n (b-1)(\lfloor \log_b x_i \rfloor + 1) \leq n(b-1)(\log_b r + 1).
\]
Therefore $B_G(r) \subseteq B_{G'}(s)$ whenever $s \geq n(b-1)(\log_b r + 1)$, which gives $f_{G'}(s) \geq b^{s/(n(b-1)) - 1}$.

\textbf{Upper bound.}
We exhibit a hard-to-compress element.
For any integer $k \geq 1$, define $w_k = (b^k - 1)(a_1 + \cdots + a_n)$.
Then $|w_k|_G = n(b^k - 1)$.
Since $b^k - 1 = \sum_{j=0}^{k-1}(b-1)b^j$, $|w_k|_{G'} = n(b-1)k$.

Now given $s \geq 1$, choose $k = \lfloor s/(n(b-1)) \rfloor + 1$.
Then $|w_k|_{G'} = n(b-1)k > s$, so $w_k \notin B_{G'}(s)$.
Since $w_k \in B_G(n(b^k - 1)) \subseteq B_G(nb^k)$, we have $B_G(nb^k) \not\subseteq B_{G'}(s)$, and thus
\[
f_{G'}(s) < nb^k \leq nb \cdot b^{s/(n(b-1))}.
\]

Combining the bounds gives $f_{G'}(s) = \Theta(b^{s/(n(b-1))})$.
\end{proof}

We next establish an upper bound: with polylogarithmically many macros, 
expansion is at most quasi-exponential.

\begin{theorem}[Polylogarithmic density gives quasi-exponential expansion \leanproof{https://github.com/Aksenov239/lean-fun/blob/toy-math-model/LeanFun/theorem2.lean\#L1033}]\label{thm:abelian-upper}
For $A_n$, let $M \subseteq A_n$ be a macro set with polylogarithmic growth:
\[
|M \cap B_G(r)| \leq c(\log(e + r))^q \quad \text{for all } r \geq 0,
\]
for some constants $c, q > 0$.
Then there exists a constant $K > 0$ depending only on $n, c, q$ such that
\[
f_{G'}(s) \leq \exp(K s \log s) \quad \text{for all } s \geq 2.
\]
\end{theorem}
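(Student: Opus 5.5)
A counting argument comparing the size of $B_{G'}(s)$ with $B_G(r)$ should work. If $B_G(r) \subseteq B_{G'}(s)$, then every element of $B_G(r)$ is a sum of at most $s$ generators from $G'$. Each generator involved must itself lie in $B_G(r)$, since in $A_n$ lengths add and are nonnegative, so no summand can exceed the total length. The relevant alphabet is therefore $G \cup (M \cap B_G(r))$, which has size at most
\[
N(r) = n + c(\log(e+r))^q .
\]
Because $A_n$ is commutative, an element of $B_{G'}(s)$ built from this alphabet is determined by a multiset of at most $s$ letters. Hence
\[
\binom{r+n}{n} = |B_G(r)| \leq \binom{N(r)+s}{s} \leq (N(r)+1)^s .
\]

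Now suppose $r = f_{G'}(s)$, and assume first that this is finite. The left side is at least $(r/n)^n$, or simply at least $r$ when $n \geq 1$, so
\[
\log r \leq s \log\bigl(n+1+c(\log(e+r))^q\bigr).
\]
The right side is only $O(s \log\log r)$. Writing $L = \log r$, the inequality reads $L \leq s\,(C_1 + q\log(L+1) + C_2)$ with constants depending on $n, c, q$. The standard inversion of $L \leq A s \log L$ gives $L \leq K s \log s$ for $s \geq 2$. To see this, if $L > K s\log s$, then $L/\log L$ is increasing, so $L/\log L \geq K s \log s / \log(K s \log s)$, and this exceeds $A s$ once $K$ is large compared to $A$. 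That yields $f_{G'}(s) \leq \exp(K s \log s)$.

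The only technical care needed is bookkeeping. First, the bound $|M\cap B_G(r)|$ controls macros of length at most $r$; macros equal to the identity or duplicating generators are harmless. Second, $f_{G'}(s)$ might a priori be infinite, but this is excluded, since the inequality above fails for large $r$. For the same reason we may apply it with $r$ any integer for which the inclusion holds, which bounds the supremum.

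The main obstacle is the final inversion step: producing a clean uniform constant $K$ valid for all $s \geq 2$, including small $s$. It is handled by choosing $K$ large enough to absorb both the additive constants and the $\log(e+r)$ versus $\log r$ discrepancy. The case of small $r$, say $r \le e$, is trivially within the bound once $K \ge 1$.
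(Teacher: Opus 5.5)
Your proposal is correct and follows essentially the same counting argument as the paper: restrict to generators of $G$-length at most $r$, bound $|B_G(r)|$ by the number of representations of length at most $s$ over an alphabet of size at most $n + c(\log(e+r))^q$, and invert the resulting inequality of the form $\log r \le s\,(C + q\log\log(e+r))$ to get $\log r \le Ks\log s$. The differences are cosmetic: you count multisets rather than words and use $|B_G(r)| \ge r$ instead of $r^n/n!$, which affects only the constant $K$. Your inversion, using $\log s \ge \log 2$, gives a uniform $K$ for all $s \ge 2$ directly, whereas the paper proves the bound for large $s$ and then enlarges $K$.
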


\begin{proof}
Fix $s \in \N$ and suppose $B_G(r) \subseteq B_{G'}(s)$, i.e., every element of length $\leq r$ can be expressed as a sum of at most $s$ generators from $G' = G \cup M$. We derive an upper bound on $r$ in terms of $s$.

\textbf{Step 1: Only macros of length $\leq r$ are relevant.}
Let $w \in B_G(r)$ and write $w = y_1 + \cdots + y_k$ with $k \leq s$ and $y_i \in G'$.
Each $y_i$ has length $|y_i|_G \geq 0$, and additivity of length in $\N^n$ gives $|w|_G = |y_1|_G + \cdots + |y_k|_G$.
Since $|w|_G \leq r$, all $|y_i|_G \leq r$; otherwise their sum would exceed $r$.
Thus, in any representation of elements of $B_G(r)$, only generators of length $\leq r$ can appear.

Define $M_r := M \cap B_G(r)$, so that $|M_r| \leq c(\log(e + r))^q$, and note that in every such representation each $y_i$ lies in $G \cup M_r$. Let
\[
t(r) := |G \cup M_r| = n + |M_r| \leq (n + c)(\log(e + r))^q.
\]

\textbf{Step 2: Upper bound on the number of words of length $\leq s$.}
The number of words of length $\leq s$ over an alphabet of size $t(r)$ is at most
\[
N_{\mathrm{words}}(r, s) := \sum_{k=0}^{s} t(r)^k \leq (s+1) (n+c)^s (\log(e + r))^{qs}.
\]
Each such word represents some element of $A_n$.
By our assumption $B_G(r) \subseteq B_{G'}(s)$ and Step 1, every element of $B_G(r)$ is representable by at least one such word.
Thus $|B_G(r)| \leq N_{\mathrm{words}}(r, s)$.

Since $|B_G(r)| = \binom{r+n}{n} \geq \frac{r^n}{n!}$, we have
\[
\frac{r^n}{n!} \leq (s+1) (n+c)^s (\log(e + r))^{qs}.
\]
Taking logarithms, for sufficiently large $s$:
\begin{equation}\label{eq:abelian-log-bound}
n \log r \leq (1 + \log(n+c)) s + qs \log\log(e + r).
\end{equation}

\textbf{Step 3: Bounding $r$.} We show that \eqref{eq:abelian-log-bound} fails for $K > 2q/n$ and sufficiently large $s$ whenever $\log r \geq Ks\log s$.
It suffices to consider $\log r = Ks\log s$, since larger $r$ only further violates the inequality.

For large $s$:
\[
\log\log(e + r) \leq \log 2 + \log(Ks \log s) \leq \log(2K) + 2 \log s.
\]
Substituting into \eqref{eq:abelian-log-bound} gives:
\[
nKs\log s \leq (1 + \log(n+c) + q\log(2K))s + 2qs\log s.
\]
Dividing by $s\log s$:
\[
nK \leq \frac{1 + \log(n+c) + q\log(2K)}{\log s} + 2q.
\]
For large $s$, the right-hand side approaches $2q$, so choosing $K > 2q/n$ yields a contradiction.

Thus $f_{G'}(s) < \exp(Ks\log s)$ for all sufficiently large $s$.
For small $s$, the bound in Step 2 shows $f_{G'}(s)$ is finite (since the polynomial growth in $r$ eventually beats the polylog growth in $r$), so by enlarging $K$ if necessary, the bound $f_{G'}(s) \leq \exp(Ks\log s)$ holds for all $s \geq 2$.
\end{proof}

Finally, we show that polynomial-density macros can yield infinite expansion.

\begin{theorem}[Polynomial density gives infinite expansion \leanproof{https://github.com/Aksenov239/lean-fun/blob/toy-math-model/LeanFun/theorem3.lean\#L381}]\label{thm:abelian-waring}
For any integer $k \geq 2$, there exists a macro set $M \subseteq A_n$ such that:
\[
|M \cap B_G(r)| \leq n r^{1/k} \quad \text{for all } r \geq 1,
\]
and
\[
f_{G'}(s) = \infty \quad \text{for all } s \geq ng(k),
\]
where $g(k)$ is the Waring constant (the smallest integer such that every nonnegative integer is a sum of at most $g(k)$ $k$-th powers).
\end{theorem}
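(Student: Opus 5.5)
The macro set will be $M = \{ m^k a_i : i = 1, \ldots, n,\ m \geq 1 \}$, i.e.\ the $k$-th powers placed along each coordinate axis, in direct analogy with the place-notation set of Theorem~\ref{thm:abelian-place}. The proof has two independent parts: a density count, and a representation argument based on Waring's theorem applied coordinatewise.

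\textbf{Density.} The macro $m^k a_i$ has $G$-length $m^k$, so it lies in $B_G(r)$ exactly when $m \leq r^{1/k}$. For each fixed $i$ this gives $\lfloor r^{1/k} \rfloor \leq r^{1/k}$ macros. Summing over the $n$ generators yields $|M \cap B_G(r)| \leq n r^{1/k}$ for every $r \geq 1$. If one wants $M$ disjoint from $G$, one can discard the $m=1$ macros, since $1^k a_i = a_i$ is already a generator; this only lowers the count.

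\textbf{Infinite expansion.} Take an arbitrary $w \in A_n$ and write it uniquely as $w = x_1 a_1 + \cdots + x_n a_n$ with $x_i \in \N$. By the definition of $g(k)$, each $x_i$ is a sum of at most $g(k)$ positive $k$-th powers, $x_i = m_{i,1}^k + \cdots + m_{i,\ell_i}^k$ with $\ell_i \leq g(k)$; the case $x_i = 0$ is the empty sum. Hence
\[
x_i a_i = \sum_{j=1}^{\ell_i} m_{i,j}^k a_i ,
\]
which uses at most $g(k)$ generators from $G' = G \cup M$, where a term with $m_{i,j}=1$ is simply $a_i \in G$. Summing over $i$ gives $|w|_{G'} \leq n g(k)$ for all $w \in A_n$. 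Consequently, for every $s \geq n g(k)$, every ball $B_G(r)$ is contained in $B_{G'}(s)$, because in fact all of $A_n$ is. The set $\{r : B_G(r) \subseteq B_{G'}(s)\}$ is therefore all of $\N$, and its supremum is $f_{G'}(s) = \infty$.

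\textbf{Main obstacle.} The argument contains no combinatorial difficulty. All the depth is in the finiteness of $g(k)$, which is the Hilbert--Waring theorem, and we take it as given. The points that need care are bookkeeping: treating zero coordinates as empty sums, keeping the density bound valid at small $r$ through the floor, and matching the $\sup = \infty$ convention of the expansion function. In a formal Lean proof the main cost would be importing or assuming Hilbert--Waring. If needed, one could state the theorem conditionally on a hypothesis $g(k) < \infty$, since the rest of the argument is elementary.
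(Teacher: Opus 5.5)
Your proposal is correct and follows the paper's proof essentially step for step. It uses the same macro set $\{m^k a_i\}$, the same count $n\lfloor r^{1/k}\rfloor \le n r^{1/k}$, and the same coordinatewise application of Waring's theorem to get $|w|_{G'} \le n g(k)$ for every $w \in A_n$, with your extra bookkeeping ($x_i = 0$ as an empty sum, and $1^k a_i = a_i \in G$) being harmless and slightly tidier than the paper's version.
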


\begin{proof}
For each generator $a_i \in G$ and each $m \in \N$, define the macro $g_{i,m} := m^k a_i$. Let
\[
M := \{ m^k a_i : i = 1, \ldots, n, \ m \geq 1 \}.
\]

\textbf{Growth bound.} A macro $m^k a_i$ has $G$-length $m^k$. The number of macros with $|g_{i,m}|_G = m^k \leq r$ is $\lfloor r^{1/k} \rfloor$ for each $i$, so
\[
|M \cap B_G(r)| = n \lfloor r^{1/k} \rfloor \leq n r^{1/k}.
\]

\textbf{Infinite expansion.} Any element $w \in A_n$ can be written as $w = x_1 a_1 + \cdots + x_n a_n$ with $x_i \in \N$. By Waring's theorem, each $x_i$ is a sum of at most $g(k)$ $k$-th powers:
\[
x_i = m_{i,1}^k + \cdots + m_{i,t_i}^k, \quad t_i \leq g(k).
\]
Thus
\[
x_i a_i = m_{i,1}^k a_i + \cdots + m_{i,t_i}^k a_i,
\]
where each term $m_{i,j}^k a_i$ lies in $M$. Summing over all $i$, the total number of macro terms is at most $\sum_{i=1}^n t_i \leq ng(k)$. Hence every element of $A_n$ lies in $B_{G'}(ng(k))$, giving $f_{G'}(ng(k)) = \infty$.
\end{proof}

\begin{remark}
For $k = 2$, Lagrange's four-square theorem gives $g(2) = 4$, so a macro set of squares with growth exponent $1/2$ achieves $f_{G'}(4) = \infty$.
\end{remark}

\begin{remark}
The macro sets in Theorems \ref{thm:abelian-place} and \ref{thm:abelian-waring} exhibit a sort of duality: 
Theorem \ref{thm:abelian-place} uses powers of a fixed base $\{b^j a_i : j \geq 1\}$, while Theorem \ref{thm:abelian-waring} uses fixed powers of varying bases $\{m^k a_i : m \geq 1\}$.
The sparser logarithmic-density set achieves exponential expansion; the polynomial-density set achieves infinite expansion. 
\end{remark}

Additional results for $A_n$, including the cases of double-logarithmic and finite macro density, appear in Appendix~\ref{appendix:additional-theorems}.

\subsection{Free Monoid}

In contrast to $A_n$, we now show that for $F_n$, a polynomially-growing macro set only achieves linear expansion and that superlinear expansion requires an exponentially-growing macro set. 
This reflects the exponential growth of the underlying monoid.

\begin{theorem}[Polynomial density gives linear expansion \leanproof{https://github.com/Aksenov239/lean-fun/blob/toy-math-model/LeanFun/theorem4.lean\#L1565}]\label{thm:free-polynomial}
For $F_n$ with $n \geq 2$, let $M$ be a macro set with at most $c\ell^p$ macros of each $G$-length $\ell \geq 2$, for some constants $c > 0$ and $p \geq 0$.
Then there exists a constant $d = d(n,p,c)$ such that for all integers $s \geq 1$:
\[
f_{G'}(s) < ds.
\]
Moreover, it suffices to choose an integer $d \geq 3$ satisfying:
\begin{equation}\label{eq:free-condition2}
n^d > 4e (n + c) d^{p+1}.
\end{equation}
\end{theorem}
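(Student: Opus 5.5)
We argue by counting, as in Theorem~\ref{thm:abelian-upper}, but now the $G$-ball grows exponentially. Suppose for contradiction that $B_G(r) \subseteq B_{G'}(s)$ with $r = ds$. In particular every word of length exactly $r$ in $F_n$ (there are $n^r = n^{ds}$ of them) is a product of at most $s$ generators from $G' = G \cup M$. Since $F_n$ is free and length is additive, such a factorization $w = y_1 \cdots y_k$ with $k \le s$ is determined by the ordered tuple of $G$-lengths $(\ell_1,\dots,\ell_k)$ with $\sum \ell_j = r$, together with, for each $j$, a choice of an element of $G'$ of length $\ell_j$. The number of such elements is at most $n$ when $\ell_j = 1$ and at most $c\ell_j^p$ when $\ell_j \ge 2$. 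Either way it is at most $(n+c)\ell_j^p$, taking $\ell_j^p \ge 1$.

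So we bound the number of representable words of length $r$ by
\[
N \le \sum_{k=1}^{s} \sum_{\substack{\ell_1+\cdots+\ell_k = r\\ \ell_j\ge 1}} \prod_{j=1}^k (n+c)\ell_j^p \le \sum_{k=1}^{s} \binom{r-1}{k-1}(n+c)^k \Bigl(\frac{r}{k}\Bigr)^{pk},
\]
using AM--GM for $\prod \ell_j \le (r/k)^k$. With $r = ds$ and $k \le s$, we then estimate $\binom{r-1}{k-1} \le \binom{ds}{k} \le (e ds/k)^k$. Each summand is thus at most $\bigl(e(n+c) (ds/k)^{p+1}\bigr)^k$. Writing $k = \theta s$ with $\theta\in(0,1]$, this is $\bigl[e(n+c)d^{p+1}\cdot \theta^{\theta}\cdot \theta^{-(p+1)\theta}\bigr]^s$ up to rearrangement. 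The function $\theta^{-(p+1)\theta}$ is bounded by $e^{(p+1)/e}$, which is where some slack is needed.

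\textbf{Main obstacle.} The obstacle is making the constants match condition~\eqref{eq:free-condition2} exactly rather than up to an unspecified factor. My first attempt is a cruder bound: replace the composition count by $\binom{ds}{k}\le 2^{ds}$ is too lossy, so instead bound $\binom{ds-1}{k-1}\le \binom{ds}{s}\le (ed)^s$ uniformly in $k\le s$ (valid since $d\ge 3$ makes $k\le s\le ds/2$ monotone), and bound $\prod \ell_j^p \le (ds)^{p}\cdots$ by AM--GM as $(ds/k)^{pk}$. I would then handle the resulting $\theta$-dependence by noting the worst case is $k=s$. If this still leaves an extra constant, the fallback is to absorb the factor $4$ in~\eqref{eq:free-condition2}: bound the sum over $k$ by $s$ times the maximal term, and use $s\le 2^s$.

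The target is $N \le (4e(n+c)d^{p+1})^s < n^{ds}$, the last inequality being exactly~\eqref{eq:free-condition2} raised to the $s$. This contradicts surjectivity onto the sphere of radius $ds$. Hence $B_G(ds)\not\subseteq B_{G'}(s)$, so $f_{G'}(s) < ds$. Existence of such an integer $d \ge 3$ is immediate, since $n^d$ grows exponentially while $d^{p+1}$ is polynomial, using $n\ge 2$.
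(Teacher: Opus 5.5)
Your final route is essentially the paper's proof: count compositions of $ds$ into $k \le s$ parts, allow at most $(n+c)\ell^p$ generators per part, apply AM--GM to $\prod \ell_j$, use the uniform bound $\binom{ds-1}{k-1}\le\binom{ds}{s}\le(ed)^s$, take $k=s$ as the worst term, and absorb the factor $s$ via $s\le 2^s$. The one step you assert without justification is that $(n+c)^k(ds/k)^{pk}$ is maximized at $k=s$. The paper obtains this from monotonicity of $b_k=\bigl((n+c)d^p(s/k)^p\bigr)^k$ whenever $(n+c)d^p\ge e^p$, which holds since $d\ge 3>e$; equivalently, the $k$-derivative of $\log b_k$ is $\log\bigl((n+c)d^p\bigr)+p\log(s/k)-p\ge 0$. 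The same check shows your abandoned first bound also works, because there the base $e(n+c)d^{p+1}$ is raised to the power $\theta$ as well, so no extra $e^{(p+1)/e}$ factor arises.
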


\begin{proof}
Fix integers $r, s \geq 1$. Consider words of exact $G$-length $r$:
\[
S_r := \{w \in F_n : |w|_G = r\}, \qquad |S_r| = n^r.
\]
We will show that for an appropriate choice of $d$, $|\{w \in S_{ds} : |w|_{G'} \leq s\}| < n^{ds}$, which implies $B_G(ds) \not\subseteq B_{G'}(s)$.

Fix $r = ds$ and $1 \leq k \leq s$.
Since $F_n$ has no relations, any representation $w = y_1 \cdots y_k$ with $y_i \in G'$ and $|w|_G = ds$ is determined by:
\begin{enumerate}
\item A composition of $ds$ into $k$ positive parts $(\ell_1, \ldots, \ell_k)$, where $\ell_i = |y_i|_G$
\item A choice of generator in $G'$ of $G$-length $\ell_i$ for each $i$
\end{enumerate}
For each length $\ell \geq 1$, there are at most $(n+c)\ell^p$ generators in $G'$ of that length (exactly $n$ for $\ell = 1$, at most $c\ell^p$ for $\ell > 1$).
Therefore:
\[
|\{(y_1, \ldots, y_k) : y_i \in G', |y_i|_G = \ell_i\}| \leq (n+c)^k \prod_{i=1}^k \ell_i^p \leq (n+c)^k \left(\frac{ds}{k}\right)^{pk},
\]
where the right-most inequality follows from AM-GM with $\sum_{i=1}^k \ell_i = ds$:
\[
\prod_{i=1}^k \ell_i \leq \left(\frac{ds}{k}\right)^k.
\]
There are $\binom{ds-1}{k-1}$ such compositions, so:
\[
|\{w \in S_{ds} : |w|_{G'} = k\}| \leq \binom{ds-1}{k-1} (n+c)^k \left(\frac{ds}{k}\right)^{pk}.
\]
Summing over $k \leq s$:
\[
|\{w \in S_{ds} : |w|_{G'} \leq s\}| \leq \sum_{k=1}^s \binom{ds-1}{k-1} (n+c)^k \left(\frac{ds}{k}\right)^{pk} \leq \binom{ds-1}{s-1} \sum_{k=1}^s (n+c)^k \left(\frac{ds}{k}\right)^{pk}.
\]
The second inequality uses the fact that for $d \geq 2$ and $1 \leq k \leq s$, we have $\binom{ds-1}{k-1} \leq \binom{ds-1}{s-1}$. 

Define:
\[
\Sigma_s := \sum_{k=1}^s (n+c)^k \left(\frac{ds}{k}\right)^{pk}.
\]
Writing $b_k := \left((n+c)d^p \left(\frac{s}{k}\right)^p\right)^k$, one can verify that if $(n+c)d^p \geq e^p$, the sequence $b_k$ is increasing in $k$ for $1 \leq k \leq s$.
Since $d \geq 3$ and $n + c \geq 2$, we have $(n+c)d^p \geq 2 \cdot 3^p > e^p$, so the monotonicity condition holds.
Thus $\Sigma_s \leq s \cdot b_s = s \cdot ((n+c)d^p)^s \leq (2(n+c)d^p)^s$.
Using $\binom{ds-1}{s-1} < \binom{ds}{s} \leq (ed)^s$:
\[
|\{w \in S_{ds} : |w|_{G'} \leq s\}| \leq (ed)^s (2(n+c)d^p)^s = (2e(n+c)d^{p+1})^s.
\]
Choose $d$ such that $n^d > 4e(n+c) d^{p+1}$.
Then
\[
|\{w \in S_{ds} : |w|_{G'} \leq s\}| \leq (2e(n+c) d^{p+1})^s < (n^d/2)^s < n^{ds}.
\]
Therefore not all words of $G$-length $ds$ lie in $B_{G'}(s)$, so $f_{G'}(s) < ds$.
\end{proof}

\begin{remark}
When $p = 0$, condition \eqref{eq:free-condition2} simplifies to $n^d > 4e(n+c)d$, recovering the logarithmic-density case.
\end{remark}

The next result shows that a macro set of vanishing sphere density---though still exponentially large in absolute terms---allows for superlinear expansion.

\begin{theorem}[Probabilistic sparse macros give superlinear expansion in $F_n$ \leanproof{https://github.com/Aksenov239/lean-fun/blob/toy-math-model/LeanFun/theorem5.lean\#L3425}]
\label{thm:free-probabilistic}
Let $F_n$ be the free monoid on $n\ge 2$ generators $G=\{a_1,\dots,a_n\}$.
There exists a macro set $M\subset F_n$ such that
\[
\frac{|M\cap S_r|}{|S_r|}\longrightarrow 0
\qquad\text{as }r\to\infty,
\]
and
\[
\frac{f_{G'}(s)}{s}\longrightarrow \infty
\qquad\text{as }s\to\infty,
\]
where $S_r=\{w\in F_n:|w|_G=r\}$, $G' = G\cup M$, and $f_{G'}$ is the expansion function.
More quantitatively, there exist constants $K,c>0$ (depending only on $n$) such that
\[
B_G(r)\subseteq B_{G'}\!\bigl(K(\log r)^2\bigr)
\quad\text{for all sufficiently large }r,
\]
and hence
\[
 f_{G'}(s)\ \ge\ \exp\!\bigl(c\sqrt{s}\bigr)
\quad\text{for all sufficiently large }s.
\]
\end{theorem}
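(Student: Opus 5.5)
The plan is to give an explicit, deterministic macro set built from a residue condition on letter counts, and then to decompose an arbitrary word by a halving recursion that costs $O(\log r)$ pieces per level over $O(\log r)$ levels. This produces the $(\log r)^2$ bound in the statement.

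\textbf{Construction and density.} For $\ell\ge 2$ with $2^k\le \ell<2^{k+1}$, set $q_k:=\max(2,k)$. Write $\#_1(u)$ for the number of occurrences of $a_1$ in $u$. Define
\[
M\cap S_\ell := \{u\in S_\ell : \#_1(u)\equiv 0 \pmod{q_k}\}.
\]
A uniformly random word of length $\ell$ has $\#_1(u)\sim\mathrm{Bin}(\ell,1/n)$. The distribution of a binomial mod $q$ is within $q\,(1-c/q^2)^{\ell}$ of uniform; this follows from the Fourier expansion $\sum_{t}|\tfrac{n-1}{n}+\tfrac1n e^{2\pi i t/q}|^{\ell}$. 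With $q_k\approx\log_2\ell$ the error is exponentially small. Hence $|M\cap S_\ell|/|S_\ell| = 1/q_k+o(1)\to 0$, a density of order $1/\log r$, matching Table~\ref{tab:main-results}.

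\textbf{Covering by recursion.} Let $w$ have length $L$, and put $k:=\lfloor\log_2 L\rfloor-1$, so that $2^{k+1}\le L$. Let $c(j)$ be the number of $a_1$'s among the first $j$ letters of $w$; as $j$ increases, $c$ increases by steps of $0$ or $1$.

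\emph{Case 1:} $c(2^{k+1})-c(2^k)\ge q_k$. Then $c(\ell)$ passes through every residue mod $q_k$ for $\ell\in[2^k,2^{k+1}]$. Choose such an $\ell$ with $c(\ell)\equiv 0$; the prefix $w[1..\ell]$ is a single macro. If $\ell=2^{k+1}$, I would shrink the window slightly so that the length class stays $k$. The remaining suffix has length $\le L-2^k\le 3L/4$.

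\emph{Case 2:} the segment $w(2^k,2^{k+1}]$ contains fewer than $q_k$ copies of $a_1$. Then it splits into fewer than $q_k$ single letters $a_1$ and at most $q_k$ maximal $a_1$-free runs. Each run of length $\ge 2$ has $\#_1=0$, so it lies in $M$; runs of length $1$ are generators. This segment therefore costs at most $2q_k+1=O(\log L)$ pieces. What remains is a prefix of length $2^k$ and a suffix of length $L-2^{k+1}$, each of length $\le L/2$.

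In both cases one step costs $O(\log L)$ generators and leaves at most two pieces, each of length $\le 3L/4$. Recursing naively on both pieces would blow up the count. To avoid this, I would always carve from a single end, so that there is only one remainder. Concretely, I apply the dichotomy to the window $[L-2^{k+1},\,L-2^k]$ measured from the right, so the "left part" is the only piece that recurses, and the Case 2 stretch is attached together with the right tail. The right tail has length $<2^k$ in this bookkeeping, and I would handle it by the same trick within the window. I expect getting this single-remainder bookkeeping right to be the main obstacle.

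\textbf{Conclusion.} Each step shrinks the length by a factor $\le 3/4$, so there are $O(\log r)$ levels, each costing $O(\log r)$ generators. This gives $|w|_{G'}\le K(\log r)^2$ for all $w\in B_G(r)$ once $r$ is large. Inverting the relation, we get $B_G(r)\subseteq B_{G'}(s)$ whenever $K(\log r)^2\le s$, i.e. $f_{G'}(s)\ge \exp(\sqrt{s/K})$. In particular $f_{G'}(s)/s\to\infty$.
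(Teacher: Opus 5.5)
Your density computation is fine, and a deterministic residue-class macro set would be a nice alternative to the paper's random one. But the covering argument has a genuine gap exactly where you flag it, and the proposed fix does not close it. In Case 2 you are left with two unrelated remainders, and anchoring the window at the right end does not change this. After spelling out the sparse window $(L-2^k-2^k,\,L-2^k]$, the right tail $(L-2^k,L]$ has length $2^k\in(L/4,L/2]$ and is an arbitrary word, so ``the same trick within the window'' just means recursing on it as well. Nor can the tail be swallowed by a single macro reaching back into the window. The suffixes $(L-\ell,L]$ with $2^k\le \ell<2^{k+1}$ have $a_1$-counts filling an interval that, under the Case 2 hypothesis, need not contain a multiple of $q_k$ (e.g.\ the window has no $a_1$ and the tail has exactly one). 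With two remainders of sizes up to about $L/4$ and $L/2$, the recursion $T(L)\le T(L/4)+T(L/2)+O(\log L)$ yields at best $T(L)=O(L^{\alpha})$ with $4^{-\alpha}+2^{-\alpha}=1$, $\alpha\approx 0.69$. That is a polynomial bound, not a polylogarithmic one.

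You can get a single remainder by treating all scales at once. From the current cut point $i$ with remaining length $L'\in[2^{k+1},2^{k+2})$, look at the windows $W_m=[i+2^m,i+2^{m+1})$ for $1\le m\le k$. Let $m^*$ be the largest $m$ for which $c$ takes at least $q_m$ values on $W_m$. Take one macro from $i$ to the \emph{last} $j\in W_{m^*}$ with $c(j)\equiv c(i)\pmod{q_{m^*}}$, and spell out $(j,i+2^{k+1}]$ by $a_1$-free runs and single letters $a_1$. If no window qualifies, spell out all of $(i,i+2^{k+1}]$ this way. That stretch contains only $O(k^2)$ letters $a_1$, and the remainder has length $<L'/2$.

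This proves $|w|_{G'}=O((\log r)^3)$ for your $M$, hence $f_{G'}(s)\ge \exp(cs^{1/3})$ and $f_{G'}(s)/s\to\infty$. It does not give the stated $B_G(r)\subseteq B_{G'}(K(\log r)^2)$ or $f_{G'}(s)\ge\exp(c\sqrt{s})$. For those you need cost $O(\log r)$ per halving with a single remainder, which the paper gets from its halving lemma. It uses random macros of density $1/\log(e+\ell)$ together with the log-periodic family $P$, which either supplies a macro outright or forces the candidate subwords to be pairwise distinct, so independence applies. A union bound and Borel--Cantelli then show that every long word has a macro of length $\ge r/2$ starting within its first $O(\log r)$ letters.

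Your $M$ has no such property. In $a_2^{L/8}a_1a_2^{7L/8-1}$, every subword of length $\ge L/4$ starting among the first $L/8$ letters contains exactly one $a_1$. So reaching $(\log r)^2$ with your $M$ would need a genuinely new way to handle long $a_1$-sparse stretches at $O(\log r)$ cost per level.
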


\begin{proof}
We build $M$ as a union of a small deterministic family of \emph{log-periodic} words
and an independent random family.

\medskip
\noindent\textbf{Step 0: log-periodic words.}
For a word $w=b_1\cdots b_L\in F_n$ of length $L=|w|_G$, say that $w$ has
\emph{period $d$} if $1\le d\le L$ and $b_j=b_{j+d}$ for all $1\le j\le L-d$.
Write $\mathrm{per}(w)$ for the least such $d$.

Fix constants $C>4\log n$ and $B\ge 2C$ (natural logarithms).
Define the deterministic macro family
\[
P \, :=\, \Bigl\{w\in F_n:\ \mathrm{per}(w)\le B\log\bigl(e+|w|_G\bigr)\Bigr\}.
\]

\medskip
\noindent\textbf{Step 1: random macros of density $1/\log \ell$.}
Independently for each word $u\in F_n$ of length $\ell\ge 2$, include $u$ in a random set $R$
with probability
\[
 p_\ell\ :=\ \frac{1}{\log(e+\ell)}.
\]
Let $M:=P\cup R$ and $G':=G\cup M$.

\medskip
\noindent\textbf{Step 2: vanishing sphere density.}
We have $|S_r|=n^r$.
First, count the deterministic part: any word in $P\cap S_r$ is determined by its period
$d\le B\log(e+r)$ and its first $d$ letters, hence
\[
|P\cap S_r|
\ \le\ \sum_{d\le B\log(e+r)} n^d
\ \le\ n^{B\log(e+r)+1}
\ =\ O\bigl(r^{B\log n}\bigr).
\]
So $|P\cap S_r|/n^r\to 0$.

Second, $|R\cap S_r|$ is $\mathrm{Bin}(n^r,p_r)$ with mean $\mu_r=n^r/\log(e+r)$.
A Chernoff bound gives
\[
\Pr\bigl(|R\cap S_r|\ge 2\mu_r\bigr)\ \le\ \exp(-\mu_r/3),
\]
and $\sum_r \exp(-\mu_r/3)<\infty$ since $\mu_r$ grows exponentially.
By Borel--Cantelli, almost surely $|R\cap S_r|\le 2n^r/\log(e+r)$ for all large $r$.
Thus, almost surely,
\[
\frac{|M\cap S_r|}{|S_r|}
\ \le\ \frac{|P\cap S_r|}{n^r}+\frac{|R\cap S_r|}{n^r}
\ \le\ o(1)+\frac{2}{\log(e+r)}
\ \longrightarrow\ 0.
\]

\medskip
\noindent\textbf{Step 3: a halving lemma (one macro consumes half the word).}
Fix a length $r$ and set
\[
 k(r)\ :=\ \lceil C\log(e+r)\rceil.
\]
For a word $w=b_1\cdots b_r\in S_r$, consider the family of long early substrings
\[
\mathcal{C}_r(w)\ :=\ \Bigl\{\, b_i b_{i+1}\cdots b_{i+\ell-1}\ :\
1\le i\le k(r),\ \lceil r/2\rceil\le \ell \le r-i+1 \,\Bigr\}.
\]
Each element of $\mathcal{C}_r(w)$ has length between $\lceil r/2\rceil$ and $r$.

\smallskip
\noindent\emph{Claim.}
For every $w\in S_r$, either $\mathcal{C}_r(w)$ contains an element of $P$,
or else all words in $\mathcal{C}_r(w)$ are pairwise distinct.

\smallskip
\noindent
Indeed, if two elements of $\mathcal{C}_r(w)$ were equal, they must have the same length $\ell$.
So we would have
\[
 b_i\cdots b_{i+\ell-1} \,=\, b_j\cdots b_{j+\ell-1}
\quad\text{for some }1\le i<j\le k(r).
\]
Let $d=j-i\le k(r)$.
Then the word $u=b_i\cdots b_{i+\ell+d-1}$ has period $d$, hence
\[
\mathrm{per}(u)\le d\le k(r)\le C\log(e+r)\le B\log\bigl(e+|u|_G\bigr),
\]
for $r$ large (since $|u|_G\asymp r$ and $B\ge 2C$).
Thus $u\in P$, and in particular $\mathcal{C}_r(w)$ contains an element of $P$.
This proves the claim.

Now suppose $\mathcal{C}_r(w)\cap P=\emptyset$.
Then the claim says $\mathcal{C}_r(w)$ is a set of distinct words.
Also, a crude count gives
\[
|\mathcal{C}_r(w)|
\ \ge\ k(r)\Bigl(\lfloor r/2\rfloor-k(r)\Bigr)
\ \ge\ \frac{r}{4}\,k(r)
\]
for all large $r$ (since $k(r)=O(\log r)$).
Because each $u\in \mathcal{C}_r(w)$ has length $\ge r/2$,
we have $p_{|u|_G}\ge 1/\log(e+r)$.
Independence of the random choice of $R$ across \emph{distinct} words gives
\[
\Pr\bigl(\mathcal{C}_r(w)\cap R=\emptyset\bigr)
\ =\ \prod_{u\in\mathcal{C}_r(w)} (1-p_{|u|_G})
\ \le\ \exp\!\Bigl(-\sum_{u\in\mathcal{C}_r(w)} p_{|u|_G}\Bigr)
\ \le\ \exp\!\Bigl(-\frac{|\mathcal{C}_r(w)|}{\log(e+r)}\Bigr).
\]
Using $|\mathcal{C}_r(w)|\ge \frac{r}{4}k(r)$ and $k(r)=\lceil C\log(e+r)\rceil$ gives
\[
\Pr\bigl(\mathcal{C}_r(w)\cap R=\emptyset\bigr)
\ \le\ \exp\!\Bigl(-\frac{C}{4}r\Bigr).
\]
Therefore, for every $w\in S_r$,
\[
\Pr\bigl(\mathcal{C}_r(w)\cap M=\emptyset\bigr)
\ \le\ \exp\!\Bigl(-\frac{C}{4}r\Bigr),
\]
since if $\mathcal{C}_r(w)$ meets $P$ then it certainly meets $M$.

By a union bound over all $w\in S_r$,
\[
\Pr\bigl(\exists\, w\in S_r \text{ with }\mathcal{C}_r(w)\cap M=\emptyset\bigr)
\ \le\ n^r \exp\!\Bigl(-\frac{C}{4}r\Bigr)
\ =\ \exp\bigl(r(\log n - C/4)\bigr).
\]
Since $C>4\log n$, the right-hand side is summable in $r$.
By Borel--Cantelli, with probability $1$ there is $r_0$ such that for all $r\ge r_0$
\emph{every} word $w\in S_r$ satisfies $\mathcal{C}_r(w)\cap M\neq\emptyset$.
Equivalently: for all large $r$, every length-$r$ word has a macro (deterministic or random)
of length $\ge r/2$ starting within its first $k(r)$ letters.

\medskip
\noindent\textbf{Step 4: recursive parsing and the $(\log r)^2$ bound.}
Fix $r\ge r_0$ and a word $w\in S_r$.
From Step~3, choose $1\le i\le k(r)$ and $\ell\ge r/2$ such that
\[
 u:=b_i\cdots b_{i+\ell-1}\in M.
\]
Then we can write
\[
 w \,=\, (b_1\cdots b_{i-1})\ \cdot\ u\ \cdot\ (b_{i+\ell}\cdots b_r),
\]
where the prefix $(b_1\cdots b_{i-1})$ uses $i-1\le k(r)$ generators from $G$ (fillers),
and $u$ is one macro.
The suffix has length at most $r-\ell \le r/2$.

Apply the same procedure to the suffix, and iterate.
After at most $\lceil \log_2 r\rceil$ iterations the remaining suffix has bounded length and can be
spelled out with generators.
At each iteration, we spend at most $k(r)+1 = O(\log r)$ tokens (fillers plus one macro).
Thus there is a constant $K$ such that every $w\in S_r$ has
\[
 |w|_{G'}\ \le\ K(\log r)^2
\]
for all $r$ large.
Since $|w|_{G'}$ is monotone in $|w|_G$, the same bound holds for every word of length $\le r$,
i.e.
\[
B_G(r)\subseteq B_{G'}\!\bigl(K(\log r)^2\bigr)
\quad\text{for all large }r.
\]
This gives the claimed expansion bound.
Writing $s=K(\log r)^2$ implies $r=\exp(\sqrt{s/K})$, so
$f_{G'}(s)\ge \exp(c\sqrt{s})$ with $c=1/\sqrt{K}$.
In particular $f_{G'}(s)/s\to\infty$.
\end{proof}

\begin{remark}
The macro set $M$ exists almost surely under the random inclusion process, but no explicit construction is given.
\end{remark}

\begin{remark}
The sphere density of $M$ satisfies $|M \cap S_r|/|S_r| \sim 2/\log r$.
In absolute terms, $|M \cap S_r| \sim 2n^r/\log r$: a vanishing fraction of each sphere, but exponentially many macros at each radius.
This contrasts with Theorem~\ref{thm:free-polynomial}, where at most $c\ell^p$ macros per length forces linear expansion.
\end{remark}

\section{Interpreting Data from MathLib}
\label{mathlib_section}

We now compare the results of Section~\ref{monoid_models_section} against MathLib.
We will think of MathLib as a proxy for HM.
Of course, MathLib's structure is partly shaped by Lean's type theory and by the choices of its contributor community.
Because MathLib is still small (roughly 500,000 elements in the version used in this
study\footnote{Commit \href{https://github.com/leanprover-community/mathlib4/tree/d167cc6dc962ab340507362ea2f4bcfcff44f01b}{ \texttt{d167cc6dc962ab340507362ea2f4bcfcff44f01b}}, dated 17 October
2025.}) and unevenly developed, we see prominent finite-size effects at its frontier.
This limits our ability to infer precise dimensional properties, but the data is consistent
with low-dimensional behavior more characteristic of $A_n$ than $F_n$.

\subsection{Constructing the dependency graph}

We construct a dependency graph from MathLib's internal Lean representation.
The vertices are all MathLib elements: lemmas, theorems, definitions, structures, and inductive types, plus a synthetic node for types (\leaninline{Sort}) and Lean core library elements (\leaninline{And}, \leaninline{Nat}, \leaninline{List}, etc.) that have no further MathLib dependencies.

Each MathLib element has two parts: a \emph{signature} (the theorem statement or type definition) and an optional \emph{body} (the proof or defining expression).
For each element $u$, we count how many times each other element $v$ is referenced, producing a directed edge from $u$ to $v$ weighted by this count.
Edges thus point toward dependencies.
The resulting data for each element is a vector of reference counts.
Note that this construction only records the multiplicity of each reference and forgets the order in which references appear within an element's expression tree.
For example, two proofs that apply the same lemmas with the same multiplicities but in different ways
produce identical dependency graph entries.

As a simple example, consider:
\begin{lstlisting}
lemma simple_lemma {A B : Prop} : A /\ B -> B :=
  fun h : A /\ B => And.right h
\end{lstlisting}
For reference, its internal representation is:
\begin{lstlisting}
type: forallE (A : Sort 0), forallE (B : Sort 0), 
      forallE (_ : app (app And #1) #0), #1
body: lam (A : Sort 0) => lam (B : Sort 0) => 
      lam (h : app (app And #1) #0) => app (app (app And.right #2) #1) #0
\end{lstlisting}
The signature contains two occurrences of \leaninline{Sort} (for the \leaninline{Prop} type of \leaninline{A} and \leaninline{B}) and one of \leaninline{And}.
The body contains \leaninline{Sort} twice, \leaninline{And} once, and \leaninline{And.right} once.
These produce weighted edges from \leaninline{simple_lemma} to the corresponding elements.

To extract these dependencies, we recursively traverse each MathLib element's signature and body, collecting every \leaninline{const} node (representing a reference to a named declaration):
\begin{lstlisting}
def collectElems (e : Expr) (acc : HashMap Name Nat := {}): HashMap Name Nat :=
  match e with
  | .const declName _             => acc.insert declName (acc.getD declName 0 + 1)
  | .app fn arg                   => collectElems arg (collectElems fn acc)
  | .lam _ binderType body _      => collectElems body (collectElems binderType acc)
  | .forallE _ binderType body _  => collectElems body (collectElems binderType acc)
  | .letE _ type value body _     => collectElems body (collectElems value (collectElems type acc))
  | .mdata _ expr                 => collectElems expr acc
  | .proj _ _ struct              => collectElems struct acc
  | .sort _                       => acc.insert (Name.mkSimple "Sort") (acc.getD (Name.mkSimple "Sort") 0 + 1)
  | _                             => acc
\end{lstlisting}
The key case is \leaninline{.const declName _}, which represents a reference to a named declaration; each such occurrence produces an edge in our graph.
The other cases (\leaninline{app} for function application, \leaninline{lam} for lambda abstraction, \leaninline{forallE} for dependent arrows, \leaninline{letE} for let-bindings) simply recurse into their subexpressions.
We also count \leaninline{.sort} occurrences, treating \leaninline{Sort} (which, for example, represents \leaninline{Prop}) as a primitive.

The resulting graph contains a small number of cycles (approximately 60 pairs of mutually dependent elements, all involving unsafe recursion).
Since our analysis requires an acyclic graph, we collapse each strongly connected component into a single vertex, reducing the vertex count from 463,719 to 463,661.

\subsection{Wrapped and unwrapped lengths, and depth}

Lean core library elements and \leaninline{Sort} (primitive elements) form the sinks of the MathLib DAG and are analogous to the generators $G$ in our monoid models.
We consider all other (non-primitive) elements to be analogous to the macro set $M$. 
Thus, we view MathLib as $G' = G \cup M$.

The \emph{unwrapped length} of an element $u$, denoted by $|u|_G$, is the total count of primitives when all references are recursively expanded. 
Primitives have unwrapped length $1$. 
For any non-primitive element with edges to elements $v_1, \ldots, v_k$ with weights $w_1, \ldots, w_k$:
\[
|u|_G = \sum_{i=1}^{k} w_i \cdot |v_i|_G.
\]
As the notation indicates, unwrapped length corresponds directly to the $G$-length in the monoid model.

The \emph{wrapped length} of an element $u$ is its token length: the number of tokens in its definition written in Lean as produced by the Lean parser.
Wrapped length corresponds to $|u|_{G' \setminus \{u\}}$ in the monoid model.
One could alternatively define wrapped length as the number of references in the internal Lean representation, i.e., the total weight of outgoing edges in the dependency graph.
However, tactics such as \leaninline{simp} and \leaninline{rw} expand during elaboration into many internal references to basic elements, inflating the reference count of some elements without introducing deep dependencies.
Under the reference-count metric, this produces elements with large wrapped length but slowly growing unwrapped length---a plateau in the compression curve that reflects proof automation rather than mathematical content.
Token count avoids this artifact: a tactic invocation is a single token regardless of how many references it generates internally.
Some MathLib elements are generated internally by Lean and have no human-written source; we omit these when reporting wrapped lengths.

The \emph{depth} of an element is the length of the longest path to primitives in the dependency DAG, i.e., the maximum number of successive reference-expansion steps required to reach generators. 
Primitives have depth $0$. 

\subsection{Distributions of wrapped and unwrapped lengths, and depth}

Figures~\ref{fig:histograms}(a)--(c) show the distributions of elements by
$\log_2(\text{unwrapped length})$, wrapped length, and depth.
In each case, the distribution is concentrated at low values with a long tail.
The declines at large values reflect finite-size effects: MathLib's coverage is uneven,
and we expect the tails to extend and fill in as the library grows.
A future analysis tracking MathLib's evolution over time could measure how the frontier expands.

\begin{figure}[ht]
\centering
\begin{subfigure}[b]{0.32\textwidth}
    \includegraphics[width=\textwidth]{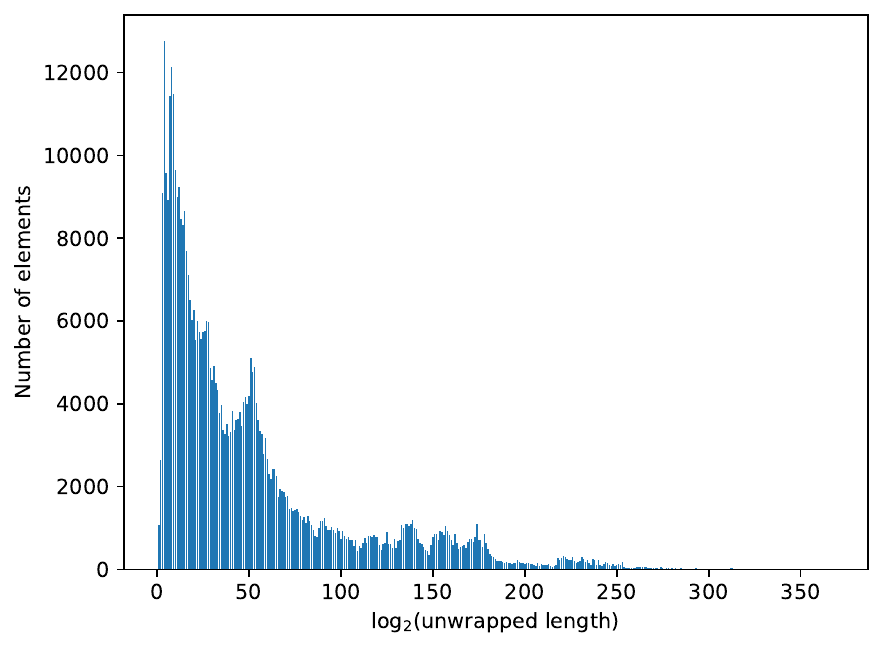}
    \caption{}
    \label{fig:hist-primitive}
\end{subfigure}
\hfill
\begin{subfigure}[b]{0.32\textwidth}
    \includegraphics[width=\textwidth]{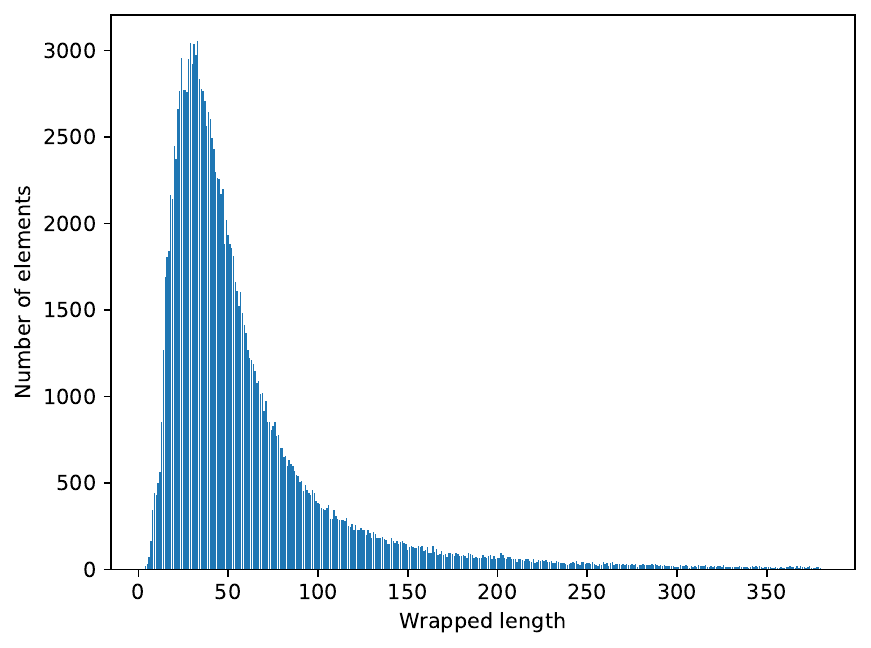}
    \caption{}
    \label{fig:hist-token}
\end{subfigure}
\hfill
\begin{subfigure}[b]{0.32\textwidth}
    \includegraphics[width=\textwidth]{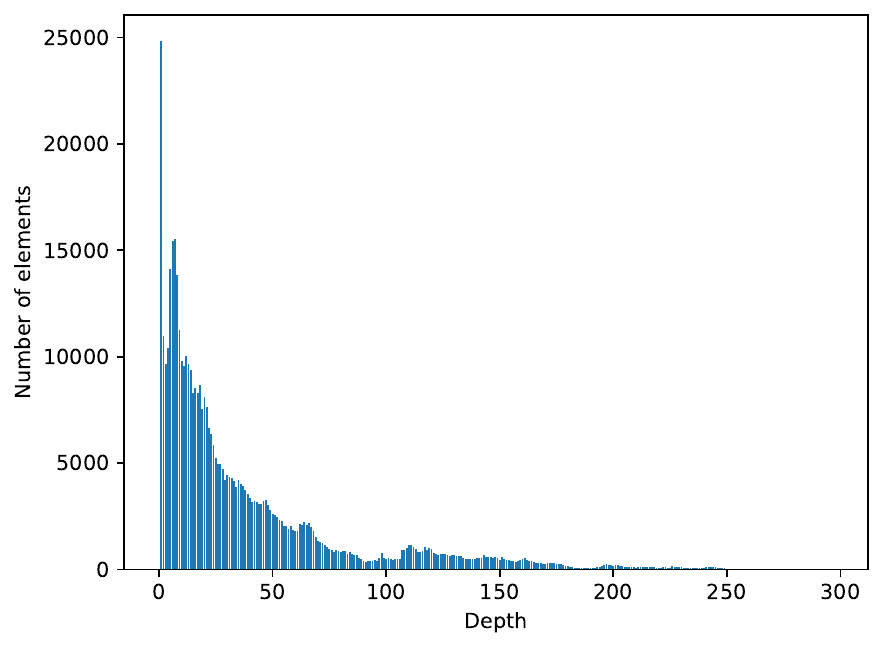}
    \caption{}
    \label{fig:hist-depth}
\end{subfigure}
\caption{Distributions of MathLib elements by (a) $\log_2(\text{unwrapped length})$,
(b) wrapped length, and (c) depth.}
\label{fig:histograms}
\end{figure}

\subsection{Unwrapped length versus wrapped length}
\label{sec:unwrapped-vs-wrapped}

Figure~\ref{fig:token-vs-primitive} shows median $\log_2(\text{unwrapped length})$ versus wrapped length. 
While $f_{G'}(s)$ measures the coverage of $G'$, these data points reflect the realized $G$-lengths $|u|_G$ achieved by MathLib's compression strategy.
The approximately linear relationship between $\log_2(\text{unwrapped length})$ and wrapped length (slope $\approx 0.4$ bits/token) indicates exponential expansion: each additional token yields a roughly constant multiplicative gain in primitive count. 
Further interpretation is provided in Section~\ref{sec:regime-discrimination}.
\begin{sloppypar}
Notice the spike at small wrapped length. 
This arises from the following types of elements, but not restricted to them: 
(1) abbreviations, such as \leaninline{isOpenMap_proj} from the module \href{https://github.com/leanprover-community/mathlib4/blob/master/Mathlib/Topology/VectorBundle/Basic.lean\#L674}{\leaninline{Topology.VectorBundle.Basic}} with wrapped length $7$ and unwrapped length $10^{48}$; 
(2) ``final'' theorems using complex intermediate ones, such as \leaninline{integrable} from the module \href{https://github.com/leanprover-community/mathlib4/blob/master/Mathlib/Analysis/Calculus/BumpFunction/Normed.lean\#L54}{\leaninline{Analysis.Calculus.BumpFunction.Normed}} with wrapped length $9$ and unwrapped length $2 \cdot 10^{54}$; 
(3) special cases of complex statements, such as \leaninline{infinitesimal_zero} from the module \href{https://github.com/leanprover-community/mathlib4/blob/master/Mathlib/Analysis/Real/Hyperreal.lean\#L731}{\leaninline{Analysis.Real.Hyperreal}} with wrapped length $8$ and unwrapped length $3.6 \cdot 10^{31}$. 
(Hyperlinks are located under the module name.)
\end{sloppypar}

\begin{figure}[ht]
\centering
\includegraphics[width=0.6\textwidth]{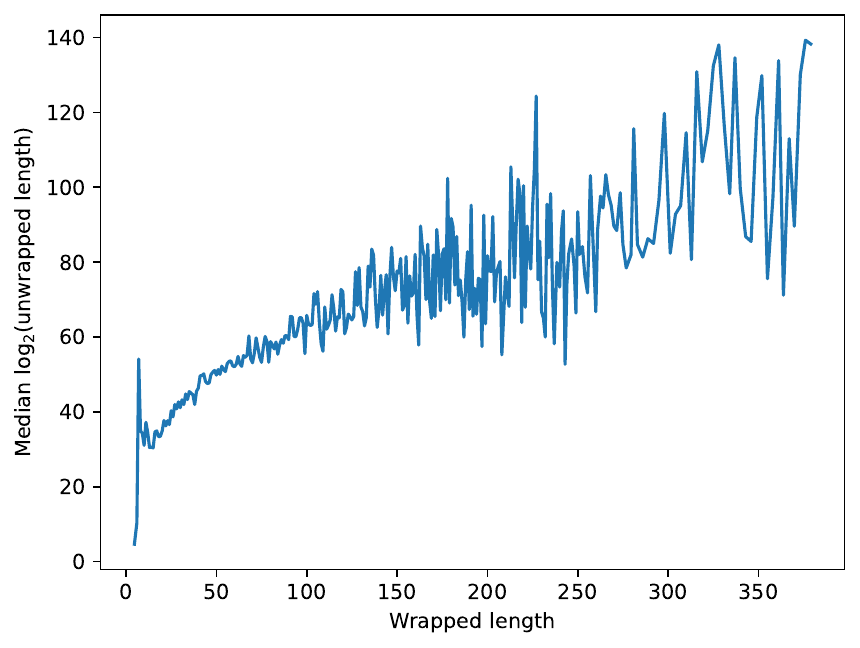}
\caption{Median $\log_2(\text{unwrapped length})$ versus wrapped length.
Each point represents the median over all elements at that wrapped length.}
\label{fig:token-vs-primitive}
\end{figure}

\subsection{Wrapped length versus depth}

Figure~\ref{fig:token-vs-depth} shows the median wrapped length as a function of depth.
The relationship is approximately flat (with at most a mild positive slope): (ignoring outliers at large depth) median wrapped length
hovers in the range of roughly 50--120 across depths 0--300, with no strong dependence on
depth.
This means that individual definitions do not become systematically longer as depth
increases, reminiscent of the powers of 10 macro of place notation.
This is not surprising since defining expressions are unlikely to become arbitrarily long; instead we expect modularization as wrapped length increases.

\begin{figure}[ht]
\centering
\includegraphics[width=0.6\textwidth]{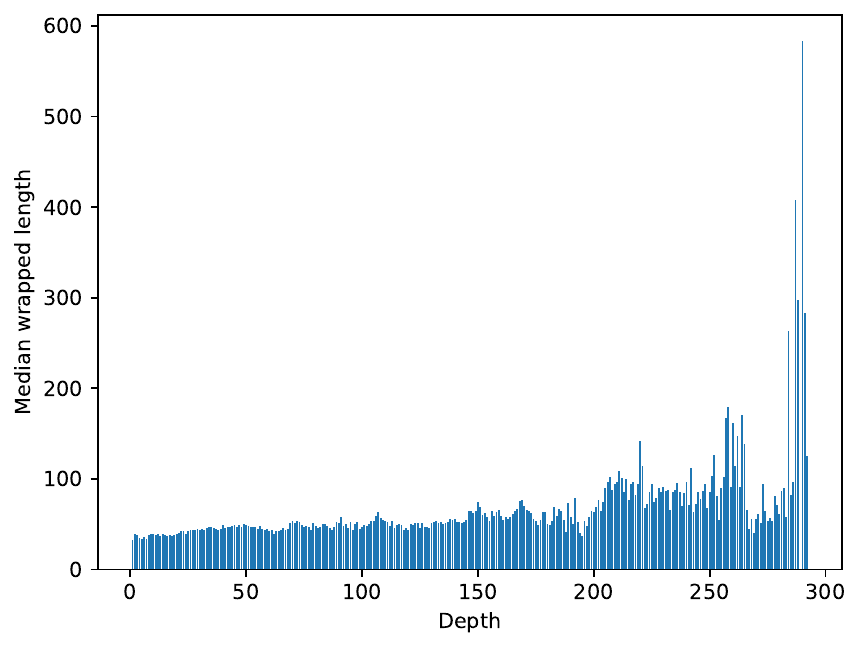}
\caption{Median wrapped length versus depth.}
\label{fig:token-vs-depth}
\end{figure}

\subsection{Unwrapped length versus depth}

Figure~\ref{fig:depth-vs-unwrapped} shows the median $\log_2(\text{unwrapped length})$ as a
function of depth.
The relationship is approximately linear with slope close to 1, indicating that unwrapped length grows exponentially with depth: each additional layer of definitions provides a roughly constant multiplicative gain in primitive count.
The maximum depth reaches approximately 300, consistent with the maximum unwrapped length of approximately $10^{104}$, achieved by the algebraic geometry entry noted in the introduction.

\begin{figure}[ht]
\centering
\includegraphics[width=0.6\textwidth]{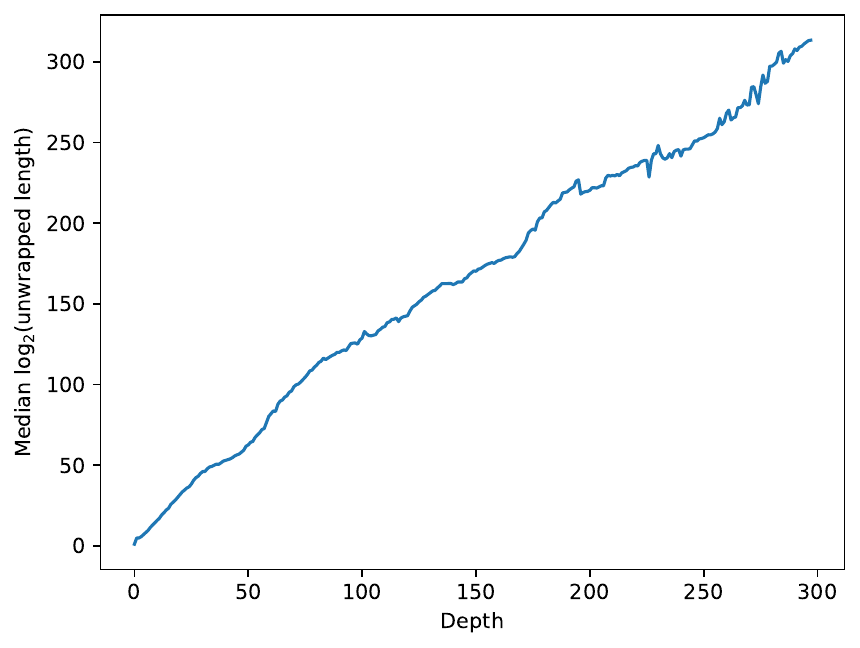}
\caption{Median $\log_2(\text{unwrapped length})$ versus depth.
The approximately linear relationship indicates exponential growth of unwrapped length with
depth.}
\label{fig:depth-vs-unwrapped}
\end{figure}

\subsection{Discriminating between regimes}
\label{sec:regime-discrimination}

To make contact between the MathLib measurements and the monoid models of Section~\ref{monoid_models_section}, we need to establish a correspondence between unwrapped length, wrapped length, depth, and their analogs in the monoid. 
Unwrapped length corresponds directly to the $G$-length $|u|_G$ in the monoid: both count the total number of primitive symbols after fully expanding all references. 

The \emph{wrapped length} of an element $u \in G'$ in the monoid is 
$|u|_{G' \setminus \{u\}}$: the minimum cost to represent $u$ using all 
generators in $G' = G \cup M$ except $u$ itself, where each generator 
(primitive or macro) contributes cost $1$. 
This mirrors the MathLib convention, where the wrapped length of an element is its token count---the cost of writing the definition using all 
available named elements.

The \emph{depth} of an element $u$ in the monoid generated by $G'$ is defined recursively using optimal representations.\footnote{For general monoids this definition will not be algorithmically computable due to lack of or limited cancellation; the definition is effective for $A_n$, $Nil_n$ (discussed later), and $F_n$.} 
Every primitive $g \in G$ has depth $0$. 
For any element $u$ with $u \notin G$, compute the optimal representation of $u$ in $G' \setminus \{u\}$; if this uses only primitives then $\mathrm{depth}(u) = 1$, and otherwise
\[
\mathrm{depth}(u) = 1 + \max\,\{\mathrm{depth}(v) : v \in M \setminus \{u\},\ 
v \text{ appears in the optimal representation of } u\}.
\]
Depth thus measures the length of the longest chain of macro dependencies required to optimally express $u$, bottoming out at primitives.
(Note that the expansion theorems of Section~\ref{monoid_models_section} impose only density conditions on the macro set.) 
This is analogous, but not identical to MathLib depth, which uses the longest path to primitives in the dependency DAG: 
MathLib depth reflects authorial choices, while monoid depth reflects the intrinsic hierarchical structure under optimal compression.

We note that we measured these quantities on all named elements in MathLib, not on all possible expressions that could be formed from them. 
Accordingly, in each monoid regime we restrict to the elements of the generating set $ G'= G \cup M$, rather than to all elements of the ambient monoid. 

Table~\ref{tab:regime-predictions} summarizes the relationships among the three quantities for each monoid regime.
The Parsimony column indicates whether the macro set grows at a strictly slower rate than the ambient monoid: subpolynomial macro growth in $A_n$, or subexponential macro growth in $F_n$.
Parsimonious macro sets achieve exponential expansion in $A_n$ but only linear expansion in $F_n$.
We discuss each row in turn below, specializing to $A_1 = \N$ and $F_2$ whenever no generality is lost.

\begin{table}[ht]
\centering
\small
\begin{tabular}{lllllc}
\hline
Regime & $\log|w|_G$ vs depth & $|w|_{G' \setminus \{w\}}$ vs depth & $\log|w|_G$ vs $|w|_{G' \setminus \{w\}}$ & Parsimony \\
\hline
$A_n$, log density & Linear & Flat$^\ast$ & Degenerate$^\ast$ & Yes \\
$A_n$, Waring & Linear & Flat & Degenerate & No \\
$A_n$, double-log & Exponential & Doubly exp. & Logarithmic & Yes \\
$F_n$, polynomial & Degenerate & Degenerate & Logarithmic & Yes \\
$F_n$, probabilistic & Linear & Quadratic & Concave ($\sqrt{\cdot}$) & No \\
\hline
\end{tabular}
\caption{Predicted relationships among $\log|w|_G$ (log unwrapped length), $|w|_{G' \setminus \{w\}}$ (wrapped length), and depth for each monoid regime. 
Measurements are restricted to elements of 
$G'= G \cup M$. 
``Degenerate'' means the independent variable (i.e., the ``x'' in ``y vs x'') is bounded.
The asterisk ($\ast$) indicates that if a generic element of the monoid is considered then Flat $\rightarrow$ Linear and Degenerate $\rightarrow$ Linear; see \ref{mathlib_summary} for discussion.}
\label{tab:regime-predictions}
\end{table}

\subsubsection{$A_n$, logarithmic-density macros (Theorem~\ref{thm:abelian-place})}
The macro set is $M = \{b^j a_i : i = 1, \ldots, n,\ j \geq 1\}$.
We specialize to $A_1$ with $M = \{b^j : j \geq 1\}$.
Macro $b^j$ has $G$-length $b^j$, so $\log|b^j|_G = j \log b$.
Its optimal representation in $G' \setminus \{b^j\}$ uses $b$ copies of $b^{j-1}$, giving wrapped length $b$, independent of $j$.
Iterating this, $\mathrm{depth}(b^j) = j$.
Thus, $\log|w|_G$ is linear in depth, wrapped length is flat across all depths, and since wrapped length is constant while $\log|w|_G$ varies freely, the third relationship is degenerate.

If one considers generic elements of $A_1$ rather than restricting to macro elements, the picture changes for columns 2 and 3. 
A generic element $x \in \mathbb{N}$ has a wrapped length $\sim \log x$ and depth $\sim \log x$. 
Thus wrapped length generally grows 
linearly with depth, and $\log|x|_G \sim |x|_{G' \setminus \{x}$. 

\subsubsection{$A_n$, polynomial-density macros / Waring 
(Theorem~\ref{thm:abelian-waring})}
The macro set is $M = \{m^k : m \geq 1\}$ for fixed $k \geq 2$ (specializing to $A_1$). 
Macro $m^k$ has $G$-length $m^k$, so $\log|m^k|_G = k \log m$.
By Waring's theorem, $m^k$ can be written as a sum of at most $g(k)$ $k$-th powers of strictly smaller integers, so wrapped length is at most $g(k)$, independent of $m$.
We don't know the optimal decomposition at each stage, but can bound the depth: a halving strategy (expressing $m^k$ using $k$-th powers of integers $\approx m/2$, then recursing) gives depth $O(\log m)$ and a linear relationship in column 1; slower reductions give greater depth and a sublinear relationship, down to logarithmic if depth $\sim m$.
In all cases, wrapped length remains bounded, so columns 2 and 3 are flat and degenerate respectively.

The Waring and log-density regimes produce identical predictions in the first three columns of Table~\ref{tab:regime-predictions} but differ in parsimony.
In contrast to the log-density case, for generic elements of $A_1$ in the Waring regime, column 3 remains degenerate since every $x \in \mathbb{N}$ has wrapped length at most $g(k)$.

\subsubsection{$A_n$, double-logarithmic density 
(Theorem~\ref{thm:abelian-double-log})}
The macro set is $M = \{b^{b^j} : j \geq 0\}$ (specializing to $A_1$). 
Macro $m_j = b^{b^j}$ has $\log|m_j|_G = b^j \log b$, which grows exponentially in $j = \mathrm{depth}(m_j)$.
The optimal representation of $m_j$ in $G' \setminus \{m_j\}$ uses 
$\lfloor m_j / m_{j-1} \rfloor = b^{b^{j-1}(b-1)}$ copies of $m_{j-1}$, so wrapped length $\sim b^{b^{j-1}(b-1)}$, which grows doubly exponentially in 
depth.
Eliminating $j$: $\log(\text{wrapped}) \sim b^{j-1}(b-1)\log b \sim 
\frac{b-1}{b} \log|m_j|_G$, so $\log|m_j|_G \sim \frac{b}{b-1} 
\log(\text{wrapped})$, giving a logarithmic relationship in column 3.
The first three columns are inconsistent with the MathLib data.

\subsubsection{$F_n$, polynomial-density macros 
(Theorem~\ref{thm:free-polynomial})}
The macro set has at most $c\ell^p$ elements of each $G$-length $\ell$, out of 
$n^\ell$ total words of that length in $F_n$. The fraction of words at length 
$\ell$ that are macros is therefore $c\ell^p / n^\ell$, which vanishes 
exponentially fast. For any macro $m$ with $|m|_G = r$, the probability that 
its optimal representation in $G' \setminus \{m\}$ contains another macro is 
negligible: the exponentially sparse macro set cannot populate the exponentially 
growing spheres of $F_n$ densely enough to sustain hierarchical nesting. 
Consequently, essentially all macros have depth $1$, with their optimal representations consisting almost entirely of primitives. 
Thus, $\log|m|_G$ vs depth and wrapped vs 
depth are degenerate. 
Since $\log|m|_G \approx \log(\text{wrapped})$, the column 3 entry is logarithmic. 
Again, the first three column entries are inconsistent with MathLib.

\subsubsection{$F_n$, probabilistic sparse macros 
(Theorem~\ref{thm:free-probabilistic})}
Here the macro set has $\sim 2n^r / \log r$ elements at radius $r$: exponentially many, though a logarithmically vanishing fraction ($|M \cap S_r|/|S_r| \sim 2/\log r$) of the sphere.
This contrasts with the polynomial case, where the fraction vanishes exponentially.
The absolute density is sufficient for the halving scheme of Theorem~\ref{thm:free-probabilistic} to work: (nearly) every word of length $r$ has a macro of length $\geq r/2$ starting within its first $k(r) = O(\log r)$ positions. 
Hierarchical depth develops as a result.

For a macro $m$ with $|m|_G = r$, the halving scheme gives 
$\mathrm{depth}(m) \sim \log_2 r$, so $\log|m|_G \sim \mathrm{depth}$. 
For the wrapped length: at each of the $\sim \log_2 r$ levels, we spend at most $k(r_t) = O(\log r_t)$ primitive fillers plus one macro, where $r_t \leq r/2^t$ is the remaining length at level $t$. The total wrapped length 
is
\[
\text{wrapped} \leq \sum_{t=0}^{\log_2 r} O(\log(r/2^t)) 
= \sum_{t=0}^{\log_2 r} O(\log r - t) = O((\log r)^2),
\]
where the sum is arithmetic with $O(\log r)$ terms each of size $O(\log r)$. 
Since $\mathrm{depth} \sim \log r$, wrapped $= O(\mathrm{depth}^2)$. 
Eliminating depth: $\log|m|_G \sim \mathrm{depth} \sim 
\sqrt{\text{wrapped}}$, giving a concave ($\sqrt{\cdot}$) relationship in column 3. 

\subsubsection{Summary and Identifying the ``Macro Set''}\label{mathlib_summary}

The MathLib data (Figures~\ref{fig:token-vs-primitive}, 
\ref{fig:token-vs-depth}, and \ref{fig:depth-vs-unwrapped}) shows: an approximately linear relationship in column 1, an approximately flat relationship in column 2, and an approximately linear relationship in column 3. 
The $F_n$ regimes are inconsistent with the data in at least two of the three columns.
The $A_n$ log-density and Waring regimes both match columns 1 and 2 for macro elements, but predict a degenerate column 3.
For generic elements in the log-density regime (recall the asterisks in Table~\ref{tab:regime-predictions}), columns 2 and 3 both become linear.
The MathLib column 2 plot is approximately flat but may have a mild positive slope, consistent with a mixture of macro and generic elements in the log-density regime.
This points to $A_n$ with log-density macros, which we note is also the parsimonious regime.

We do not take the all-non-primitives identification seriously as the true ``macro set'' for MathLib.
MathLib contains abbreviations and trivial specializations (see Section~\ref{sec:unwrapped-vs-wrapped}) that merely invoke deep elements, contributing little compression on their own.
On the other hand, some elements contribute substantial compression, e.g., the filter abstraction that unifies many limit theorems into a single framework.
The identification of the correct ``macro set'' is a central problem.

One simple approach to refining the ``macro set'' is to filter by in-degree in the dependency graph, removing elements in the bottom $x$ percentile for varying $x$. 
Removing an element from $M$ leaves unwrapped lengths unchanged, but increases wrapped lengths and decreases depths. 
Another option is to restrict the ``macro set'' to definition-like elements, e.g., those whose resulting type is \leaninline{Sort}; we found that the resulting macro set provides little compression.
One could also formulate the problem as an optimization, e.g., given a dependency DAG, find the ``macro set'' of fixed size $k$ that minimizes the total wrapped length.
Whether these or related approaches bring the three metrics into better agreement with the $A_n$ log-density predictions is an open question.

The difficulty of identifying the ``macro set'' may reflect limitations of MathLib itself as a proxy for HM.
An enriched representation that captures relationships MathLib leaves implicit (e.g., that a family of related theorems are instances of a single pattern), or permits (as a hypergraph does) storage of multiple proofs of the same theorem, might admit a more satisfactory ``macro set.''
Finding a representation with an identifiable ``macro set'' is not merely a question of library science: it could reveal new mathematical structure and help direct automated agents.

\section{Discussion}\label{discussion_section}

\subsection{Why monoids?}

Monoids model the sequential structure of proofs; for the dependency structure, where multiple premises are consumed simultaneously, $n$-categories and ``globular magmas'' offer alternative frameworks.
Inspired by the ``growth of groups'' \cite{Milnor1968}, which is essentially independent of generating set, one might expect that the choice of formal system should not greatly affect the shape of mathematics.
However, the recursive nature of mathematics leads to unexpectedly large distortions: rewriting between different formulations of a mathematical theory with identical proof power can be non-recursively 
inefficient (see \cite{BDF} and references therein).
If rules and syntax are fixed, the underlying minimal-proof DAG is a discrete metric space with the axioms as base point.
This metric geometry is modeled by the Cayley-graph geometry of each monoid and underlies the concepts of polynomial and exponential growth.

\subsection{Why monoids rather than groups}

Groups introduce inverses, and inverses complicate the expansion analysis.
This difficulty has historical precedent: Post \cite{Post} encoded the halting problem into the word problem for monoids in 1947, but extending this to groups required another decade of work by Boone and Novikov \cite{Novikov1955, Boone1959}.
The gap reflects the technical complications that inverses introduce.

For our purposes, the problem is concrete.
In the free group, any word $w$ can be written with length 2 by choosing macros $m$ and $m'$ that nearly cancel: $mm' = w$.
By making $m$ and $m'$ sufficiently long, we can do this for every word while keeping the macro density arbitrarily small.
This ``cancellation trick'' trivializes the expansion question for groups.

One might object that deduction rules like Modus Ponens ($A \to B$, $A \vdash B$) involve a kind of cancellation.
However, reversible logic is universal with only modest overhead \cite{Bennett1973}, so cancellation is not essential to computation.
More pragmatically, free monoids admit exact analysis---we stand where the light is good.

While on the topic of groups, there is a Lie-theoretic analogy worth noting: the inclusion $A_n \hookrightarrow \prod_{i=1}^n F_{n_i}$ resembles the inclusion of a maximal torus $T^{n-1} \hookrightarrow SO(n)$.
If $A_n$ is broadened to free-nilpotent monoids $Nil_{n,k}$ ($n$ counting homological rank and $k$ the nilpotency level), then the inclusion $Nil_{n,k} \hookrightarrow \prod_{i=1}^n F_{n_i}$ is reminiscent of the Iwasawa decomposition $G = KAN$ of a semisimple 
Lie group.

\subsection{Nilpotent and solvable monoids}
\label{nilpotent_discussion}

The dichotomy between $A_n$ and $F_n$ reflects different growth rates: polynomial versus exponential.
This difference, not abelian versus non-abelian, is what matters.

The nonnegative Heisenberg monoid illustrates the polynomial-growth case.\footnote{The nonnegative Heisenberg monoid consists of $3 \times 3$ nonnegative integer matrices with zeros below the diagonal and ones on the diagonal.
It can be presented as $\langle a, b, z \mid ab = baz, az = za, bz = zb \rangle$.}
It is nilpotent and non-abelian, and of polynomial growth.
Here again, and in all free nilpotent monoids, logarithmic-density macros yield exponential expansion, just as for $A_n$.
The proof is simple.
$Nil_{n,k}$ is the monoid with $n$ ``base'' generators, with additional secondary generators to simulate commutation, and the minimal relations needed to encode the level-$k$ right-nested commutators (recall we have no inverses in these monoids).
All these generators, and their number is polynomial in $n$ and $k$, individually generate a copy of $\N$.
Build separate macro sets in each of these $\N$-directions.
Then take the union of these $\N$-macro-sets to be the macro set for $Nil_{n,k}$.
Up to polynomial factors it will have the same expansion function as the mini-macro-sets in each $\N$.
Among monoids with cancellation (meaning $ac = bc$ implies $a = b$, and $ca = cb$ implies $a = b$), the geometrically crucial property of ``polynomial growth'' is characterized (as in groups) by Gromov's criterion \cite{Gromov1981}: they are sub-monoids of finite index within a nilpotent monoid.
So there is at hand a well-understood class of slow-growth monoids to help us model HM.

The nonnegative sector of the SOL lattice illustrates the exponential-growth case.
This monoid is solvable and non-abelian---more structured than a free monoid---but still has exponential growth.
Counting arguments parallel to Theorem~\ref{thm:free-polynomial} show that it admits no poly-logarithmically growing macro with super-linear expansion.

\subsection{Why formal mathematics resists compression}

That human mathematics compresses well is familiar from experience and confirmed by our MathLib analysis.
Less obvious is that FM, as a whole, must contain vast regions that resist compression.

By compression we mean something specific: reductive compression, the local substitution of definitions, not arbitrary algorithmic encoding.
The digits of $\pi$ illustrate the distinction.
The number $\pi$ has low Kolmogorov complexity---a short program computes it---but the digit string admits no known local compression via pattern substitution.
Reductive compression requires finding a repeated structure to name.
High reductive compressibility does, however, imply low 
Kolmogorov complexity, since the DAG of definitions encodes a short reconstruction procedure.
Moreover, this procedure runs in at most linear time, so reductively compressible elements have low logical depth in the sense of Bennett~\cite{BennettDepth}.
Thus reductive compressibility is a more restrictive condition than low Kolmogorov complexity.

With this understanding, the incompressibility of most of FM follows from standard complexity assumptions.
Consider theorems of the form ``the Boolean formula $S$ is unsatisfiable.'' 
We argue that for typical $S$ such theorems lie outside HM.
Such statements are easy to write down, but for generic $S$ the shortest proofs are believed to be exponentially long---essentially, one must check all possible variable assignments.
No system of definitions can shortcut this exhaustive search.
If such proofs could be radically compressed, this would contradict the assumption $P \neq NP$.

Thus HM consists of the compressible regions of FM---the mathematics where definitions provide leverage.
This motivates our search for monoid models that are highly compressible (modeling HM) yet embed in larger monoids that resist compression (modeling FM).
Compressibility may even serve as a first approximation to mathematical taste; Section~\ref{section_application_outlook} develops 
this idea, distinguishing \emph{reductive} compression (shortening statements via definitions) from \emph{deductive} compression (shortening proofs given statements) and using PageRank to combine both into an automated measure of mathematical interest.
 
\subsection{Comparison with cellular automata}

Kolmogorov complexity anticipates the action of any possible algorithm 
and thus has no locality restriction, whereas reductive compression requires the algorithm to act locally.
A cellular automaton (CA) also acts locally, but on a fixed lattice or graph.
In reductive compression, the underlying lattice also undergoes collapsing at each step (as well as changing its site labels); this kind of dynamic may be called a ``collapsing cellular automaton'' (CCA), and is how we think of compression operationally.

\subsection{A self-referential remark}

In writing this paper, we have introduced a new mathematical concept: the expansion function $f_{G'}(s)$.
This generalizes a notion from additive number theory.
The \emph{additive rank} of a subset $S \subseteq \N$ is the fewest copies of $S$ whose sumset equals $\N$; the \emph{asymptotic rank} is the fewest copies whose sumset is cofinal in $\N$.
When $S$ is too sparse for any finite number of copies to cover $\N$, these notions break down.
Our expansion function measures instead how large a ball can be covered by sums of at most $s$ elements from $G'$---a natural generalization when infinite coverage is impossible.

That we were led to introduce a definition while studying the role of definitions in mathematics is perhaps fitting.
Definition formation is so natural to mathematical practice that even analyzing math requires new definitions.

\section{Application and Outlook}\label{section_application_outlook}

Can we give AI agents a sense of direction—an automated criterion for which mathematical statements merit attention?
Surely historical and cultural factors influence human judgments of mathematical interest, but here we attempt to identify observables intrinsic to the mathematical representations themselves (see also \cite{BDF}). 

\begin{table}[ht]
\centering
\begin{tabular}{lcc}
\hline
 & $G$ & $G' \setminus \{u\}$ \\
\hline
$S$ & $|S|_G$ & $|S|_{G' \setminus \{u\}}$ \\
$B$ & $|B|_G$ & $|B|_{G' \setminus \{u\}}$ \\
\hline
\end{tabular}
\caption{Four measures of an element with signature $S$ (statement) and body $B$ (proof).
The ratio across rows measures reductive compression; the ratio down the $G' \setminus \{u\}$ column measures deductive compression.}
\label{tab:compression-quadrant}
\end{table}

The central thesis of this paper suggests compressibility as a natural candidate.
Here we consider two types.
For any element $u$ in a mathematical corpus, whether a definition, lemma, or theorem, define the \emph{reductive compression}:
\[
T_0(u) = \frac{|S|_G+|B|_G}{|S|_{G' \setminus \{u\}}+|B|_{G' \setminus \{u\}}},
\]
the ratio of unwrapped to wrapped length for the full element (signature plus body).
Here, we are using the monoid notation defined in Section~\ref{sec:regime-discrimination}, where $|\cdot|_G$ corresponds to unwrapped length and $|\cdot|_{G' \setminus \{u\}}$ to wrapped length.
Table~\ref{tab:compression-quadrant} summarizes the four resulting measures of an element.
Elements with large $T_0$ (``taste'') live in regions where definitions provide substantial leverage---precisely the regions we have identified with human mathematics.
An agent exploring mathematics can track the average value of $T_0$ as it explores from region to region.
This might assist it in staying close to HM.
Although this risks biasing the agent toward abstraction---which should not be pursued for its own sake---$T_0$ can contribute to an agent's sense of direction.

The ratio of wrapped body length to wrapped signature length measures another type of compressibility, \emph{deductive compression}\footnote{An alternative is $|B|_G/|S|_G$, i.e., measured in $G$ rather than $G' \setminus \{u\}$, which measures the primitive proof-to-statement ratio before compression. This would strongly reward theorems with naive statements (like Fermat's Last Theorem) that require elaborate additional developments (the theory of elliptic curves) for their proof.}:
\[
I_0(u) = \frac{|B|_{G' \setminus \{u\}}}{|S|_{G' \setminus \{u\}}}.
\]
Elements without bodies (primitives, structure declarations, inductive types) have $I_0 = 0$; they may still achieve high $T_0$ using definitions.
Elements with large $I_0$ (``interest'') have short statements but long proofs, even after compression.
In MathLib, elements with high $I_0$ include the deep theorems of algebraic geometry and category theory, where layers of abstraction compress enormous unwrapped expressions into manageable statements.
One imagines that formalized versions of landmark results---Fermat's Last Theorem, the Poincar\'e conjecture, resolution of singularities---would achieve exceptional compression ratios, their terse statements belying vast proof machinery.

When statements become long enough to encode logical conundra, $I_0$ can be gamed: metamathematical constructions produce arbitrarily large values for elements of questionable interest.
For example (and with thanks to  Sam Buss), the theorem asserting $k$-consistency: that a formal system has no proof of $0=1$ in fewer than $k$ symbols, can have tiny 
compressed length, since recursive function theory allows the rapid description of certain enormously large  integers k. 
However Pudl\'ak~\cite{Pudlak1986, Pudlak1987} showed that any proof  of k-consistency in a sufficiently rich (and consistent) system must have length $\Omega(k^{1/2})$---no system of definitions can shortcut it. By taking $k = \mathrm{BB}(n)$, BB denoting the Busy Beaver function, one obtains a family of theorems with phenomenally large $I_0 \approx \text{BB}^{1/2}(k)/\log k$  that few would consider \textit{that} interesting, at least on an individual basis, since this is merely one of a huge family, parameterized by $k$, of similar theorems; individually, they are logical curiosities rather than core mathematics.


\subsection{A PageRank-style refinement}
The issue is that $I_0$ treats all high-compression elements equally, regardless of their role in the broader mathematical structure.
A refinement should incorporate not just the compression achieved by an element, but also its \emph{usefulness} in building other high-value elements.

Google's PageRank algorithm~\cite{brin1998anatomy} offers a natural framework.
Consider the full dependency graph, with edges pointing from each element to its dependencies.
A random walk on this graph would accumulate at primitives---the sinks of the DAG---which achieve low compression and are not mathematically interesting in the sense we seek.
The standard fix is teleportation: at each step, with probability $\alpha$ the walker follows an edge, and with probability $1 - \alpha$ it jumps to a random node.
Even with uniform teleportation, this may produce nontrivial rankings by identifying useful elements from graph structure alone.
We suggest biasing teleportation toward high-compression elements.
We parametrically combine our two compression measures into $J_0 = \beta T_0 + (1-\beta) I_0$ (after normalizing each to comparable scales) for some $0 < \beta < 1$, and let an element $u$ be chosen as the teleportation destination with probability $J_0(u) / \sum_v J_0(v)$.
The resulting transition matrix is
\[
P(v, u) = \alpha \cdot \frac{w(u, v)}{W(u)} + (1 - \alpha) \cdot \frac{J_0(v)}{Z},
\]
where $w(u,v)$ is the number of times $u$ references $v$, $W(u) = \sum_x w(u,x)$ is the total reference count of $u$, and $Z = \sum_x J_0(x)$.

A stationary distribution $\pi$ satisfies $\sum_u P(v, u) \pi(u) = \pi(v)$; standard PageRank theory (i.e., the Perron--Frobenius theorem) guarantees existence and uniqueness since every node has positive teleportation probability.
Define
\[
I_1(u) = \pi(u).
\]
Elements score highly if they are either high-compression themselves (frequent teleportation destinations) or depended upon by elements that are visited often.
This captures ``load-bearing'' elements: those that support the compressible regions of mathematics.
We expect that $\alpha$ will need to be properly tuned to avoid trivial $\pi$, such as when all mass is concentrated at either the axioms or the largest $J_0$ elements in the DAG.

\subsection{Some Open Questions}

First, the computational challenge: determining optimal compression requires searching over possible definitions, which is computationally expensive.
Our interest measures assume a fixed set of definitions, but an agent exploring FM would need to propose new definitions on the fly.
Can this be done efficiently enough to guide exploration?

Second, definitional compression occupies one extreme of a spectrum.
At the other extreme is Kolmogorov complexity, which allows arbitrary algorithmic compression but is uncomputable.
Definitional compression is local and efficiently verifiable: applying a definition requires only checking that certain properties have been derived, and the process runs in at most linear time (in Bennett's~\cite{BennettDepth} terminology, it has low ``logical depth'').
Is there useful middle ground—compression methods more powerful than local substitution but still computationally tractable?

Finally, we note an empirical question for MathLib and similar repositories: Do the proofs of ``interesting'' statements stay close to the ground (using only shallow intermediate lemmas), or do they take flight through highly compressed intermediate statements?
In physical terms, what potential barriers must be overcome to reach deep theorems?
The depth and mass distributions in Section~\ref{mathlib_section} offer preliminary data, but a systematic study correlating these metrics with human judgments of interest remains to be done.

\bibliography{refs.bib}

\appendix

\section{Additional Expansion Theorems}
\label{appendix:additional-theorems}

The following results complete the picture for $A_1 = \N$ summarized in Table~\ref{tab:main-results}.

\begin{theorem}[Double-logarithmic density gives polynomial expansion \leanproof{https://github.com/Aksenov239/lean-fun/blob/toy-math-model/LeanFun/theorem6.lean\#L1701}]\label{thm:abelian-double-log}
For $A_1 = \mathbb{N}$ and any integer $b \geq 2$, the macro set $M = \{ b^{b^j} : j \geq 0 \}$ has double-logarithmic density and satisfies
\[
c_1 \, s^{b/(b-1)} \leq f_{G'}(s) \leq c_2 \, s^{(2b-1)/(b-1)}
\]
for all $s \geq 1$, where $c_1, c_2 > 0$ depend only on $b$.
\end{theorem}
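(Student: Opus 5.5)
The plan is to exploit the divisibility chain $m_0 \mid m_1 \mid m_2 \mid \cdots$, where $m_j = b^{b^j}$, so $m_0 = b$, $m_{j+1} = m_j^b$, and the ratios are $\rho_j := m_{j+1}/m_j = m_j^{b-1}$. The density claim is immediate, since $|M \cap B_G(r)| = \#\{j : b^{b^j} \le r\} = \lfloor \log_b \log_b r \rfloor + 1$.

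\textbf{Key structural fact.} I would first show that $|x|_{G'}$ is computed by the greedy mixed-radix expansion
\[
x = q_J m_J + \sum_{j<J} q_j m_j + q_{-1}\cdot 1, \qquad 0 \le q_j < \rho_j,\quad 0 \le q_{-1} < b,
\]
where $J$ is largest with $m_J \le x$. Then $|x|_{G'} = q_J + \sum_{j<J} q_j + q_{-1}$. Optimality of greedy follows from a standard exchange argument: if some non-top coefficient of a representation reaches $\rho_j$, replace $\rho_j$ copies of $m_j$ by one copy of $m_{j+1}$. Each such replacement strictly lowers the cost, so an optimal representation has all non-top digits below their radix. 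By uniqueness of mixed-radix expansions, it is then the greedy one. I expect this to be the step needing the most care, although it is routine. Everything else is arithmetic with $\rho_j$. A second fact is that the ratios grow so fast that $\sum_{j<J} \rho_j \le 2\rho_{J-1} = 2 m_{J-1}^{b-1} = 2 m_J^{(b-1)/b}$ once $J \ge 1$ (and small $J$ is absorbed into constants).

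\textbf{Lower bound.} Set $r = \lfloor c\, s^{b/(b-1)} \rfloor$ and take any $x \le r$. Let $J$ be largest with $m_J \le r$, so that $m_J \le r < m_J^b$ and hence $m_J > r^{1/b}$. By the structural fact,
\[
|x|_{G'} \le \frac{r}{m_J} + 2 m_J^{(b-1)/b} + b.
\]
Both main terms are at most $r^{(b-1)/b}$: the first because $m_J > r^{1/b}$, the second because $m_J \le r$. Hence $|x|_{G'} \le 3 r^{(b-1)/b} + b \le s$ for a suitable $c = c(b)$ and $s$ large. This gives $f_{G'}(s) \ge c_1 s^{b/(b-1)}$, and small $s$ is handled by shrinking $c_1$, since $f_{G'}(s) \ge s$ trivially.

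\textbf{Upper bound.} Given $s$, let $J$ be smallest with $\rho_J = m_J^{b-1} > s+1$. Consider the hard element
\[
x = (s+1)\, m_J + (m_J - 1) < m_{J+1}.
\]
Its greedy expansion has top digit $q_J = s+1$ and nonnegative lower digits, so $|x|_{G'} \ge s+1 > s$, and therefore $f_{G'}(s) < x \le (s+2)\, m_J$. By minimality of $J$, $m_{J-1}^{b-1} \le s+1$, so
\[
m_J = m_{J-1}^b \le (s+1)^{b/(b-1)}.
\]
Consequently $f_{G'}(s) \le (s+2)(s+1)^{b/(b-1)} \le c_2\, s^{(2b-1)/(b-1)}$, using $1 + \tfrac{b}{b-1} = \tfrac{2b-1}{b-1}$. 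The edge case where no smaller $J$ exists (i.e.\ $J = 0$) only affects constants.
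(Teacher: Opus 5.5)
Your proof is correct. Its skeleton matches the paper's, but the lower bound takes a different route, and you prove a lemma the paper only assumes.

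\textbf{Upper bound.} This is essentially the paper's argument. Both use a witness of the form $c\,m_k + (m_k - 1)$ lying just below $m_{k+1}$, then bound $m_k$ by a constant times $s^{b/(b-1)}$. The paper takes $k$ with $T_k \le s < T_{k+1}$, where $T_k = |m_k-1|_{G'}$, and coefficient $s - T_k + 1$, so the witness has length exactly $s+1$. You take $J$ minimal with $\rho_J > s+1$ and coefficient $s+1$.

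\textbf{Greedy optimality.} You prove explicitly, via the exchange $\rho_j\, m_j \to m_{j+1}$, that the greedy mixed-radix expansion is optimal. The paper never proves this, yet it relies on it in two places. First, it writes ``Letting $T_k = |m_k - 1|_{G'}$, we obtain'' the greedy recurrence. Second, it asserts that its witness has $G'$-length \emph{equal} to $s+1$. Its induction for $S_j = T_j$ establishes only the inequality $|x|_{G'} \le c_j + T_j$; the claimed equality at $m_{j+1}-1$ tacitly assumes greedy is optimal. Your lemma fills that implicit step.

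\textbf{Lower bound.} Here the routes genuinely differ. You choose the index from $r$ rather than from $s$. With $m_J \le r < m_J^{b}$, the greedy digit sum is at most $r/m_J + 2m_J^{(b-1)/b} + b \le 3r^{(b-1)/b} + b$. Your estimate $\sum_{j<J}\rho_j \le 2\rho_{J-1}$ is the same fact as the paper's $T_k \sim b^{b^{k-1}(b-1)}$. The paper instead fixes $k$ by $T_k \le s < T_{k+1}$ and splits into the cases $s \le 2T_k$ and $s > 2T_k$.

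\textbf{What each approach buys.} Your version is shorter and shows why the exponent is $b/(b-1)$: it is where $r/m$ balances $m^{(b-1)/b}$. The paper's bookkeeping through $T_k$ and $S_j = T_j$ gives the exact worst-case length below each $m_k$, hence a witness of length exactly $s+1$. That refinement changes only constants, not exponents.
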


\begin{proof}
Let $m_j = b^{b^j}$ for $j \geq 0$.
The number of macros with $G$-length at most $r$ is the number of $j$ with $b^{b^j} \leq r$, i.e., $j \leq \log_b \log_b r$.
Thus $M$ has double-logarithmic density.

\textbf{Greedy representation of $m_k - 1$.}
The largest macro not exceeding $m_k - 1$ is $m_{k-1}$.
The number of copies used is
\[
\left\lfloor \frac{m_k - 1}{m_{k-1}} \right\rfloor = \left\lfloor b^{b^{k-1}(b-1)} - b^{-b^{k-1}} \right\rfloor = b^{b^{k-1}(b-1)} - 1,
\]
with remainder $m_{k-1} - 1$.
Letting $T_k = |m_k - 1|_{G'}$, we obtain
\[
T_k = (b^{b^{k-1}(b-1)} - 1) + T_{k-1}, \qquad T_0 = b - 1.
\]
The first term dominates: $T_k \sim b^{b^{k-1}(b-1)}$.

\textbf{The elements $m_k - 1$ are hardest to compress.}
Let $S_j = \max\{|x|_{G'} : x < m_j\}$.
We claim $S_j = T_j$ for all $j \geq 0$.
The base case $S_0 = T_0 = b - 1$ is clear.
For the inductive step, suppose $S_j = T_j$ and consider any $x$ with $m_j \leq x < m_{j+1}$.
Writing $x = c_j \cdot m_j + r$ with $0 \leq r < m_j$, we have $|x|_{G'} \leq c_j + T_j$.
Since $c_j \leq b^{b^j(b-1)} - 1$, we obtain $S_{j+1} \leq T_{j+1}$.
Equality holds at $x = m_{j+1} - 1$.

\textbf{Upper bound.}
Fix $s \geq b - 1$ and let $k$ satisfy $T_k \leq s < T_{k+1}$.
The element $(s - T_k + 1) m_k + (m_k - 1)$ has $G'$-length equal to $(s - T_k + 1) + T_k = s + 1$, so it is not in $B_{G'}(s)$.
Thus $f_{G'}(s) < (s + 2) m_k$.
From the recurrence, $T_k \geq b^{b^{k-1}(b-1)} - 1$, so $T_k \leq s$ implies $b^{b^{k-1}(b-1)} \leq s + 1 \leq 2s$.
Thus, $b^{k-1} \leq \frac{\log_b(2s)}{b-1}$ and we have $m_k = b^{b \cdot b^{k-1}} \leq (2s)^{b/(b-1)}$.
Therefore $f_{G'}(s) \leq c_2 \, s^{(2b-1)/(b-1)}$.
For $s < b - 1$, we simply choose $c_2$ sufficiently large.

\textbf{Lower bound.}
With $k$ as above, any $x = c \cdot m_k + y$ with $0 \leq y < m_k$ satisfies $|x|_{G'} \leq c + T_k$.
If $c \leq s - T_k$ then $x \in B_{G'}(s)$, so $f_{G'}(s) \geq (s - T_k + 1) m_k - 1$.

\textit{Case $s \leq 2T_k$.}
Then $T_k \geq s/2$, so $m_k = b^{b^k} \geq c \, T_k^{b/(b-1)} \geq c \, (s/2)^{b/(b-1)} \geq c_1 \, s^{b/(b-1)}$.
Since $s - T_k + 1 \geq 1$, we obtain $f_{G'}(s) \geq c \, s^{b/(b-1)}$.

\textit{Case $s > 2T_k$.}
Then $s - T_k + 1 \geq s/2$.
From $s < T_{k+1} \leq C b^{b^k(b-1)}$ we obtain $b^k \geq \frac{\log_b(s/C)}{b-1}$ and thus $m_k = b^{b^k} \geq (s/C)^{1/(b-1)}$.
Therefore $f_{G'}(s) \geq (s/2)(s/C)^{1/(b-1)} \geq c_1 \, s^{b/(b-1)}$.
\end{proof}

\begin{theorem}[Finite macro gives linear expansion \leanproof{https://github.com/Aksenov239/lean-fun/blob/toy-math-model/LeanFun/theorem7.lean\#L69}]\label{thm:abelian-finite}
For $A_1 = \mathbb{N}$, let $M$ be a finite macro set.
Then $f_{G'}(s) = \Theta(s)$.
\end{theorem}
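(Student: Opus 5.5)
We prove the two inequalities $f_{G'}(s) = \Omega(s)$ and $f_{G'}(s) = O(s)$ separately. Let $M = \{g_1,\dots,g_m\}$, viewed as positive integers with $g_i = |g_i|_G$; macros equal to $0$ are useless and may be discarded. Set $L = \max(1, g_1, \dots, g_m)$, the largest $G$-length of any generator in $G' = G \cup M$ (recall $G = \{1\}$).

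\textbf{Lower bound.} Since $G \subseteq G'$, we have $B_G(s) \subseteq B_{G'}(s)$, as noted right after the definition of the expansion function. Hence $f_{G'}(s) \geq s$ for every $s$, with no further work needed.

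\textbf{Upper bound.} We use additivity of length in $A_1 = \mathbb{N}$, exactly as in Step~1 of the proof of Theorem~\ref{thm:abelian-upper}. If $x = y_1 + \cdots + y_k$ with $k \leq s$ and $y_i \in G'$, then
\[
x = \sum_i |y_i|_G \leq kL \leq sL .
\]
So every element of $B_{G'}(s)$ is at most $sL$. In particular $sL+1 \in B_G(sL+1)$ but $sL+1 \notin B_{G'}(s)$. This gives $B_G(sL+1) \not\subseteq B_{G'}(s)$, and therefore $f_{G'}(s) \leq sL$.

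Combining the two bounds, $s \leq f_{G'}(s) \leq L s$, so $f_{G'}(s) = \Theta(s)$ with constants depending only on $M$.

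No step is genuinely hard. The only point that needs care is the supremum in the definition of $f_{G'}$: because the balls are nested in $r$, the single witness $sL+1$ excludes every $r > sL$, so the supremum is attained and finite. The same argument, with $n$ copies of each macro, extends the result to $A_n$.
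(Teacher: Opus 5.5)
Your proof is correct and follows the paper's argument exactly: the lower bound $f_{G'}(s) \geq s$ comes from $B_G(s) \subseteq B_{G'}(s)$, and the upper bound $f_{G'}(s) \leq sL$ from additivity of length, which gives $B_{G'}(s) \subseteq B_G(sL)$. Your explicit witness $sL+1$ and the choice $L = \max(1, g_1, \dots, g_m)$ only make explicit what the paper leaves implicit; in particular the latter also covers the degenerate case of an empty $M$.
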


\begin{proof}
Let $L = \max\{|m|_G : m \in M\}$ be the largest $G$-length among all macros.

\textbf{Upper bound.}
Any element $x$ with $|x|_{G'} \leq s$ is a sum of at most $s$ generators from $G' = G \cup M$.
Each generator has $G$-length at most $L$, so $|x|_G \leq sL$.
Thus $B_{G'}(s) \subseteq B_G(sL)$, which gives $f_{G'}(s) \leq sL$.

\textbf{Lower bound.}
Since $G \subseteq G'$, we have $|x|_{G'} \leq |x|_G$ for all $x$.
Thus $B_G(s) \subseteq B_{G'}(s)$, which gives $f_{G'}(s) \geq s$.

Combining the bounds gives $f_{G'}(s) = \Theta(s)$.
\end{proof}

\begin{remark}
Theorems~\ref{thm:abelian-double-log} and~\ref{thm:abelian-finite} extend to $A_n$ with the macro sets $\{b^{b^j} a_i : i = 1, \ldots, n, j \geq 0\}$ and any finite $M \subseteq A_n$ respectively, yielding the same asymptotic expansion rates.
\end{remark}

\end{document}